\newtheorem{theorem}{\bf Theorem}
\newtheorem{lemma}{\bf Lemma}
\newtheorem{corollary}{\bf Corollary}
\begin{document}

\title{Privacy Threats Analysis to \\ Secure Federated Learning}

\author{Yuchen Li, Yifan Bao, Liyao Xiang,~\IEEEmembership{Member,~IEEE,} Junhan Liu, Cen Chen, Li Wang, Xinbing Wang,~\IEEEmembership{Senior~Member,~IEEE}
	\thanks{Y. Li, Y. Bao, L. Xiang (the corresponding author, xiangliyao08@sjtu.edu.cn), J. Liu are with Shanghai Jiao Tong University, China. C. Chen and L. Wang are with Ant Group. This work was partially supported by NSF China (61902245, 62032020, 61960206002), CCF-Ant Group Research Fund (CCF-AFSG RF20200007) and the Science and Technology Innovation Program of Shanghai (19YF1424500).}
}

% The paper headers
\markboth{Journal of \LaTeX\ Class Files,~Vol.~14, No.~8, August~2015}%
{Shell \MakeLowercase{\textit{et al.}}: Privacy Threats Analysis to Secure Federated Learning}

\IEEEtitleabstractindextext{%
\begin{abstract}
Federated learning is emerging as a machine learning technique that trains a model across multiple decentralized parties. It is renowned for preserving privacy as the data never leaves the computational devices, and recent approaches further enhance its privacy by hiding messages transferred in encryption. However, we found that despite the efforts, federated learning remains privacy-threatening, due to its interactive nature across different parties. In this paper, we analyze the privacy threats in industrial-level federated learning frameworks with secure computation, and reveal such threats widely exist in typical machine learning models such as linear regression, logistic regression and decision tree. For the linear and logistic regression, we show through theoretical analysis that it is possible for the attacker to invert the entire private input of the victim, given very few information. For the decision tree model, we launch an attack to infer the range of victim's private inputs. All attacks are evaluated on popular federated learning frameworks and real-world datasets.
\end{abstract}

% Note that keywords are not normally used for peerreview papers.
\begin{IEEEkeywords}
Privacy, machine learning, federated learning.
\end{IEEEkeywords}}

% make the title area
\maketitle

\IEEEdisplaynontitleabstractindextext

\IEEEpeerreviewmaketitle

\section{Introduction}
\label{sec:intro}

%federated learning and its impact
Proposed by Google, federated learning (FL) enables computational parties to collaboratively learn a shared model while keeping all training data local, decoupling the ability to perform machine learning from the need to store the data in one place. The approach stands in contrast to traditional centralized machine learning where local datasets are collected at one central server, and has gradually become the gold standard for proprietary entities to learn jointly across their information boundaries. Federated learning frameworks such as {\tt TensorFlow Federated, FATE, PySyft} and others have grown more and more popular in the production line.

%how VFL and HFL work
Depending on the distribution characteristics of the data, FL frameworks work differently. Horizontal Federated Learning (HFL) jointly trains a model over distributed data samples sharing the same attributes. For example, demographic profiles of different regions share the same attributes but totally different data records.  Vertical Federated Learning (VFL) learns the global model on data separated by attributes. An example is that, commercial banks, revenue agencies, or third-party payment platforms may own different perspectives of a client's financial well-being, and are required to jointly analyze the customer behavior. In both HFL and VFL settings, distributed entities cannot share their proprietary data with others.

%privacy threats in FL and secure FL
Although private input stays local, it is recognized that FL reveals much private information through intermediate results. By inverting from the gradients or features, it is possible to recover the original input by recent studies \cite{zhu2019deep, inproceedings1}. Beyond inversion attacks, property inference attacks are capable of inferring private input attributes. Observing the privacy leakage, a variety of works propose secure FL, {\em i.e.,} homomorphic encryption (HE), multi-party computation (MPC) and other cryptography-based techniques, to secure the process. With secure computation protocols implemented, it is hard for any party to obtain the original gradients or features in plain text. 

%problem: privacy risks in secure FL
However, we found that privacy threats remain at large even with secure FL. Attackers can infer unexpected information from the interactions with the victim, and hence pose as a significant privacy threat. Previous studies \cite{hitaj2017deep, nasr2019comprehensive, melis2019exploiting} assume the attacker plays maliciously to inject intentionally designed inputs, for the purpose of tricking the victim to releasing more private information. We do not assume such an attack as the behavior would harm the shared model, but rather assume an honest-but-curious attacker who follows the secure FL protocol and only exploits what it obtains during the process. We aim to give a quantified analysis of the privacy risk in secure FL. 

%our approach: attacks to regression models
We analyze the privacy leakage of linear and logistic regression models in the two-party as well as the multi-party FL settings. In these settings, we show the privacy leakage by constructing inversion attacks on victim's private training data. Taking advantage of the encrypted gradients exchanged at each iteration, we demonstrate it is possible for the attacker to extract quadratic equations on the private training data of the victim's. If we allow the attacker to send arbitrary queries at the inference phase, linear equations of victims' private training data can be further obtained. With the knowledge combined, we illustrate how easy it is for the attacker to invert all private training data of the victim's.

%our approach: attacks to decision tree
We further analyze the privacy risk of SecureBoost model in FL setting. In SecureBoost, the tree model classifies data samples by their attribute ranges without revealing the attribute. However, we discover that, with a careful design of the input query, it is highly likely for an attacker to recover a precise range of victim's private data, posing great threats to the model.

%conditions on inverting private inputs
Our major contribution can be considered as an analysis on how easy it is to invert a participant's private inputs in FL settings. As we found out, the system is not as secure as it typically claims, as the attacker can invert all victim's private data given very limited information at a few spots. We analyze such risk by formulating equations and seeking the necessary and/or sufficient conditions that the equations having finite solutions. With rigorous proofs, we show the attacker quickly drills down to a few solutions given only limited information.  This indicates participants' private training data is potentially at significant risks, even in the secure FL.

Highlights of our work include: by analyzing the participants' interactions in secure FL settings, we point out that privacy threats widely exist in regression models and tree models. We quantify such privacy threats by an equation system, based on which we successfully launch attacks on industrial-level systems in experiments.

\section{Background}
\label{sec:background}
We introduce the common secure federated learning frameworks. For each of the targeted scenarios, we first provide the general setting and give a specific framework as an example.

\begin{figure}[th]
	\centering
	\includegraphics[width=0.75\linewidth]{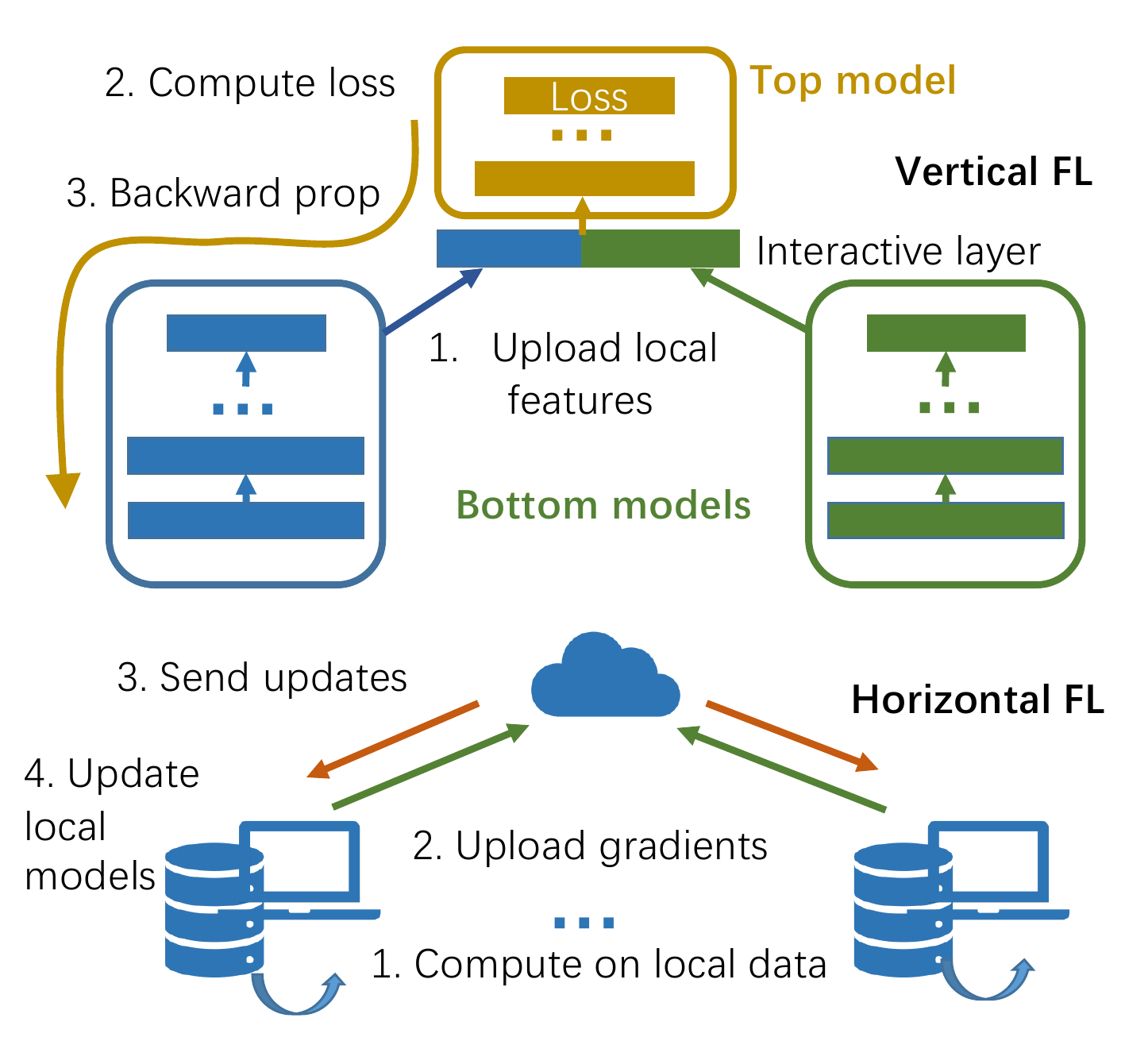}
	\caption{Federated learning frameworks.}
	\label{fig:frameworks}
\end{figure}
\vspace{-3mm}

\subsection{Vertical Federated Learning}
\label{sec:vfl}
In vertical FL, different attributes of data samples belong to different participants. Participants need to identify data samples in common by entity resolution.  The training process is provided in Fig.~\ref{fig:frameworks} (upper). Since features of the same data are distributed across multiple parties, all participants find their common data samples without compromising the non-overlapping parts of the datasets. Apart from the local datasets, each participant has its own bottom model which may be different depending on the features they process. The parties jointly build the interactive layer which puts together the intermediate features of all participants as one. The interactive layer is owned by a central server, or an arbiter, who also maintains the top model and feeds it with the output of the interactive layer. In the forward propagation, each participant feeds its input to their respective bottom models to produce the interactive-layer feature. The feature is fed to the top model to calculate the loss. The backward loop propagates the error from the output layer to the interactive layer and then to each bottom model, which gets locally updated. 

\begin{figure}[th]
	\centering
	\includegraphics[width=1\linewidth]{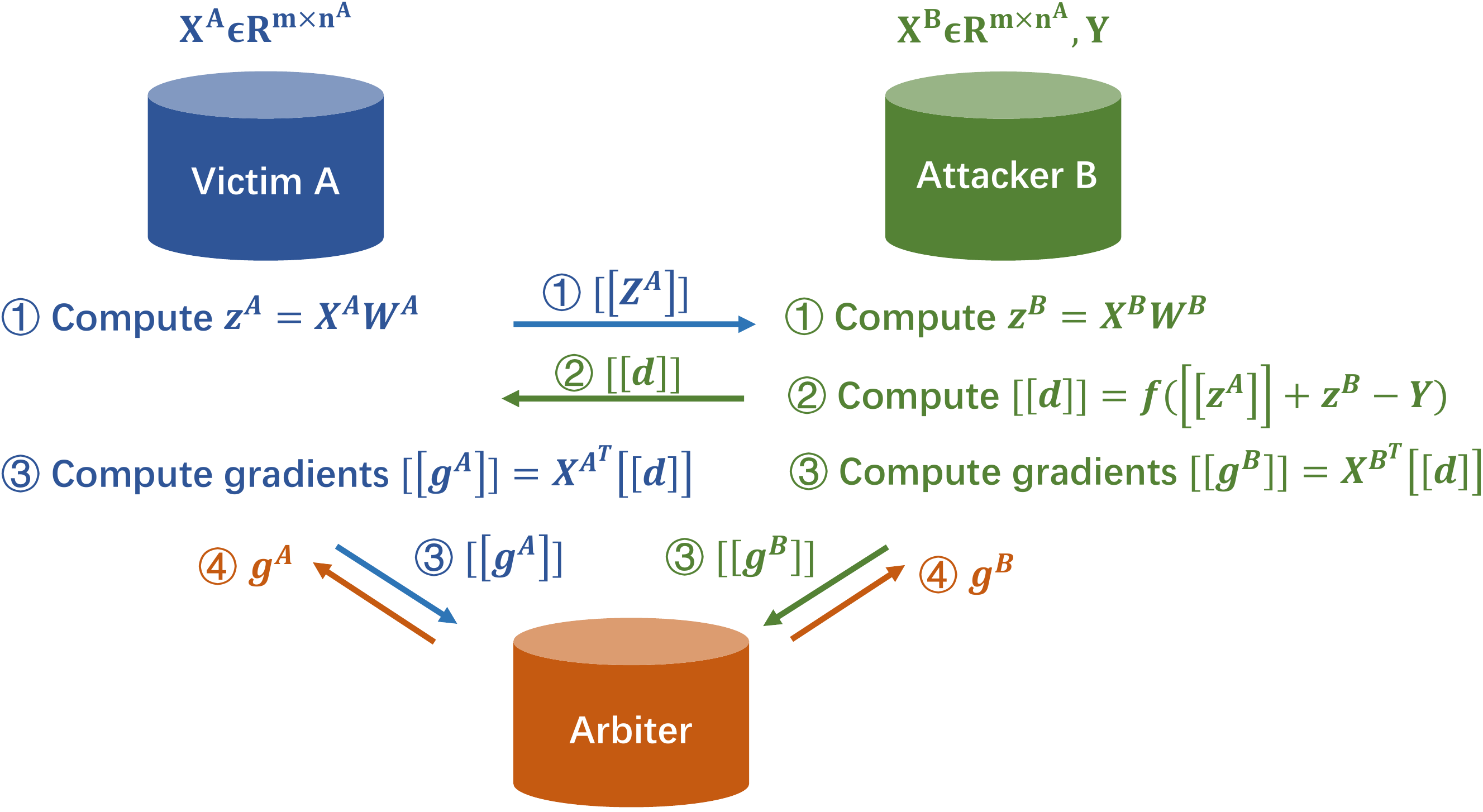}
	\caption{Procedures of vertical federated learning.}
	\label{Vertical}
\end{figure}

\textbf{FATE} uses homomorphic encryption in the training process to preserve data privacy. We use the two-party linear regression as an illustrative example. The training procedure is summarized in Fig.~\ref{Vertical}. A and B respectively own data $X^A$ and $X^B$ of which the attributes are aligned. $Y^A$ and $Y^B$ are labels of their data. $W^A, b^A$ and $W^B, b^B$ are the bottom model weights and biases. The top model minimizes the following loss function:
$$L = \frac{1}{2}(||d||^2+\alpha (||W^A||^2+||W^B||^2)),$$ where
\begin{equation}
	\label{eqn:d_eqn}
	d = f(X^{A}W^{A}) + f(X^{B}W^{B}) - Y.
\end{equation}
A and B respectively compute $z^A = f(X^{A}W^{A}) $ and $z^B = f(X^{B}W^{B})$ as the intermediate-layer features. Then A transfers the encrypted feature $[[z^A]]$ to B for B to calculate $[[d]]$. B transfers the encrypted $[[d]]$ back to A. Then A and B respectively calculate the encrypted gradients $[[g^A]]$ and $[[g^B]]$. Finally, A and B transfer the gradients to arbiter for decryption and get their gradients $g^A$ and $g^B$ respectively to update the model. For logistic regression, one only needs to replace the function $f$ with a Sigmoid function. Typically, a polynomial function is used to simulate the Sigmoid function to allow homomorphic encryption on $f$.

%\textcolor{red}{Due to the encryption and the strictest information transfer, every party cannot get any intermediate calculation information from others. After each iteration, every party can only get its own gradients, which is necessary for itself to update the local model. The local model is only related to its own attribute.  It seems like this strictest setting will absolutely reveal no data privacy, but we prove that this will also have a data leakage in Sec.~\ref{sec:formulate}}

\subsection{Horizontal Federated Learning}
\label{sec:hfl}
In horizontal federated learning, the training data gets partitioned horizontally among parties, {\em i.e.,} data matrices or tables are partitioned by samples. Data from different participants has the same attributes. The training process of horizontal FL is shown in the Fig.~\ref{fig:frameworks} (lower). In the initial round, participants pull a randomly initialized model from the centralized server, and train the local model on their respective datasets. Each participant uploads locally computed weights/gradients to the server for aggregation. The central server maintains a global model, and uses the averaged weights/gradients to update the global model. As a final step, the server sends the updated model to each participant for the next round of local training. 

%As each participant obtains a copy of the updated model, it nurtures the opportunity for a malicious participant to peek someone else's updates. At the end of training, the global model serves as a black-box with a prediction interface which accepts input queries. Users can send input to the global model to receive the prediction outcome.

\begin{figure}[th]
	\centering
	\includegraphics[width=0.9\linewidth]{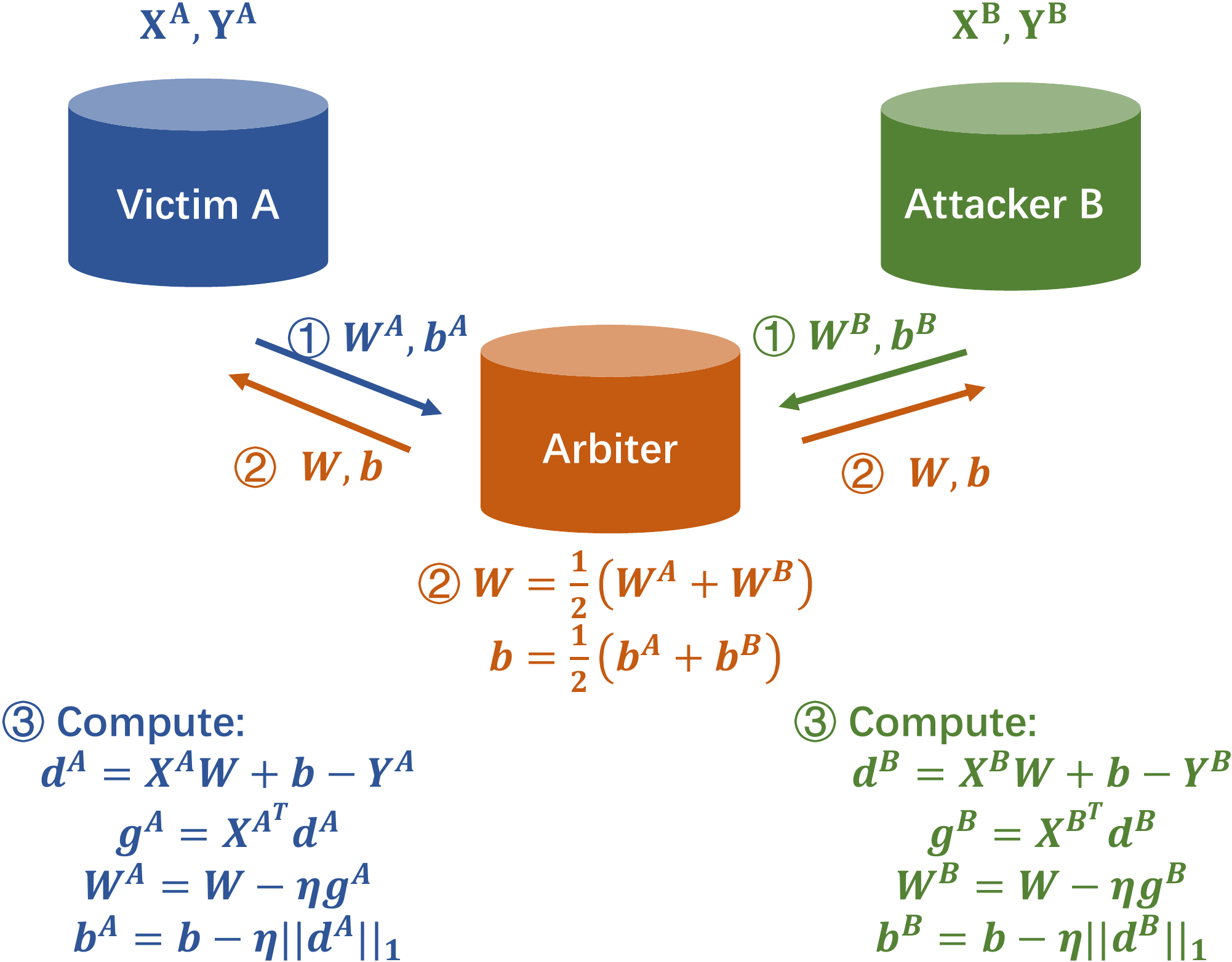}
	\caption{Procedures of horizontal federated learning.}
	\label{Horizontal}
\end{figure}

\begin{figure}[th]
	\centering
	\includegraphics[width=1\linewidth]{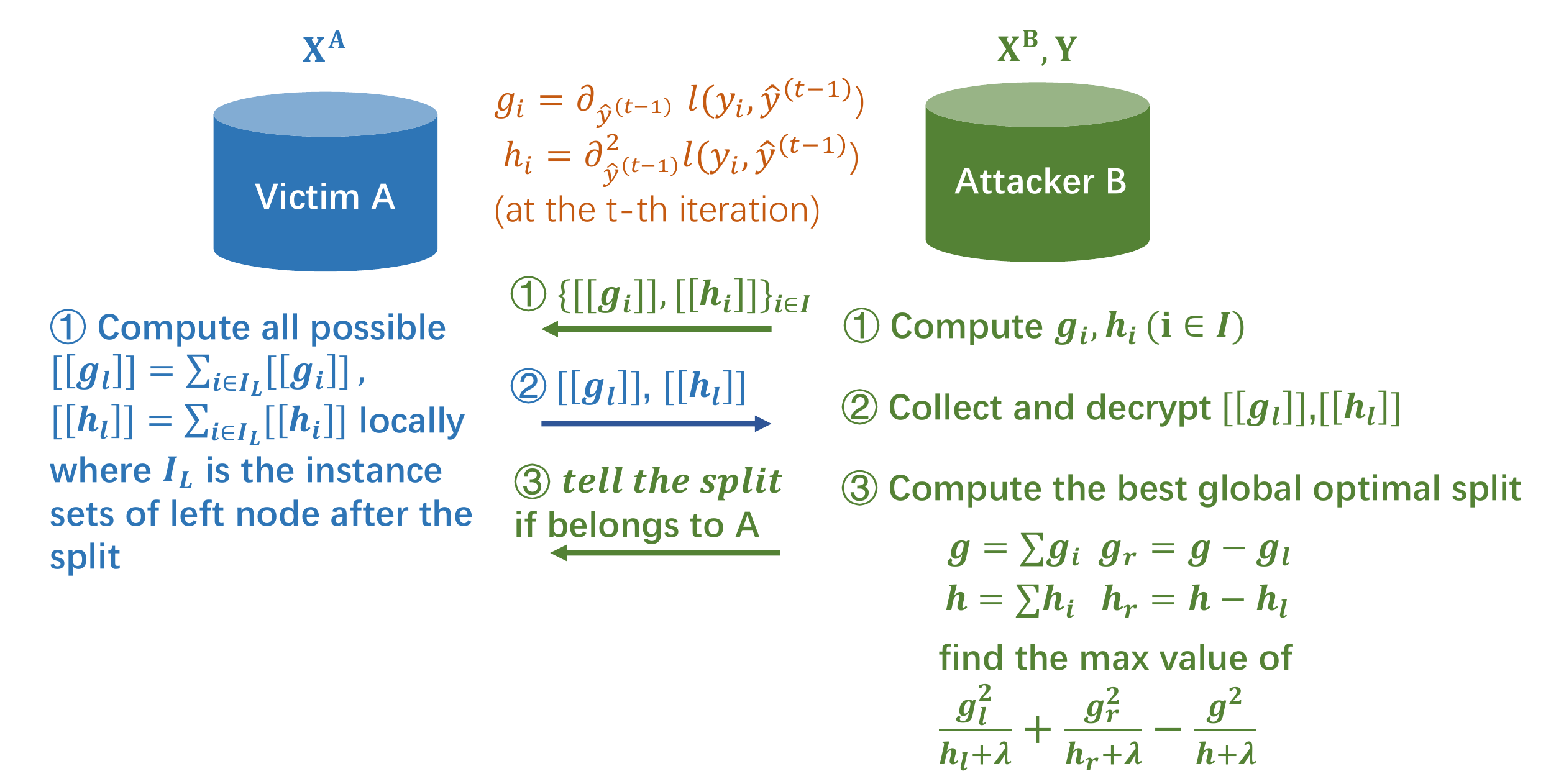}
	\caption{Procedures of SecureBoost.}
	\label{secureboost}
\end{figure}

\textbf{PySyft} is a privacy-preserving federated learning framework built on multiparty computation and differential privacy. Two-party linear regression on the horizontal FL is shown in Fig.~\ref{Horizontal}. In each training iteration, both A and B train their model locally, and upload their encrypted model parameters to arbiter for averaging. The averaged parameters are sent back to A and B for decryption.  A and B use the decrypted model parameters to update their models locally.

%\textcolor{red}{Under this framework, it seems intuitively that when there are only two parties, both parties can get the weight of the other easily. However, data leakage produced by weight still remain uncertainty. We will discuss about this and give a solid proof. Furthermore, we will also consider the multiparty condition.}

\subsection{SecureBoost} \label{seb}
Similar to vertical FL, SecureBoost builds a decision tree without the participating parties revealing their attributes or training instances. \textbf{SecureBoost} integrates XGBoost \cite{Chen_2016} into FATE, of which the procedure is given in Fig.~\ref{secureboost}. B iteratively uses the label and the intermediate results to calculate the best splitting point (of an attribute) for each node. In the $t$-th iteration, B calculates the first and second-order derivative $g_{i}$ and $h_{i}$ for the $i$-th item and encrypt them. The encrypted values are sent to A who sum up all items in $I_{L}$, which is the set of instances in the left node. For example, if A has three items $\{1,2,3\}$ and divide them by $\{1,2\},\{3\}$ according its attribute, A computes the encrypted values of $g_{1}+g_{2}, g_3$ and $h_{1} + h_{2}, h_{3}$. Then A sends back the encrypted sum to B. Throughout the process, A cannot obtain the values of $g_{i}$ and $h_{i}$ in plaintext but merely divide the items depending on A's features. B knows how A divides items into different sets but has no idea what feature A uses. Finally, B uses the decrypted values to determine the optimal splitting point for the next iteration.

\section{Threat Model}
We first introduce the threat model under which we discuss the privacy leakage. The attackers in our threat model are honest-but-curious, {\em i.e.,} that they follow the standard training process and the protocols but are curious about others' private data. Depending on the number of participants, the specific threat model varies a bit. 

In the {\em two-party} case, we let one of the parties, w.l.o.g., B, be the attacker and A be the victim. The arbiter is not corrupted. In the vertical FL setting, we assume the worst-case adversary B possesses labels of the training samples. In both the horizontal and vertical FL settings, B's objective is to steal A's private data. In the decision tree model, B aims to recover the range of A's private data.

In the {\em multi-party} scenario, we assume one of the participants and the arbiter are corrupted, together they try to recover all other participants' private data. This is possible in reality as the attacker might collude with the arbiter to intercept others' data. We later show that the case is equivalent to the two-party one, since all encrypted information can be decrypted at the arbiter, and can be transfered to B. 

It is worth mentioning that our proposed attacks are not straightforward under the threat model: simply deducting the attacker's knowledge from the interactive-layer features cannot exactly recover victim's data. The attacker's objective is to recover the exact values (ranges in the decision tree model) of the victims. It is our goal to demonstrate how easy it is to expose participants's data in federated learning tasks.

\section{Attacks to Regression Models}
\label{sec:formulate}
With an emphasis on the regression models, we quantify the privacy risks in both the vertical and horizontal FL settings by revealing the least amount of corrupted data that an attacker requires to reconstruct the victim's entire inputs.

\subsection{Privacy Risks in Vertical FL}
\label{sec:reg_vfl}
We use the instance of linear regression to analyze the privacy risks in vertical federated learning described in Sec.~\ref{sec:vfl}. We first look into the two-party case and then extend it to the multi-party scenario.

Inheriting the denotations in Sec.~\ref{sec:vfl}, we let $X^{A} \in \mathbb{R}^{m \times n^{A}}, X^{B} \in \mathbb{R}^{m \times n^{B}}$ where $m$ is the number of training items in common for A and B and $n^{A}, n^{B}$ are respectively the number of attributes. In the $k$-th training iteration, Attacker B decrypts its gradient as 
$$g^B_k = {X^B_k}^T(z^A_k+z^B_k-Y_k)+\alpha W^B_k,$$
where $z^A_k=X^A_k W^A$ and $z^B_k=X^B_k W^B_k$. As B knows the values of $g^B_k$, $z^B_k$, $X^B_k$, $Y_k$ and $W^B_k$, as long as ${X^B_k}^T$ has $m_k$ linearly independent rows, B can solve the linear equations to obtain $z^A$, which is originally encrypted in the training process. 

Having $m_k$ linearly independent rows means it is required that $n^B \geq m_k$, which is a necessary condition for solving $z^A$. Since the real-world data record is rarely dependent on each other, we only need $n^B \geq m_k$. In the case where $m_k \geq n^B$, we can let B fake some features which are randomly distributed to meet the requirement. Since the fake features are randomly distributed, the model would not learn anything from them and the effect of these features would reduce to $0$ as the model converges. Therefore, without hurting model accuracy, B can achieve the requirement of $n^B \geq m_k$ with fake features.

On obtaining the intermediate results $z^A$, B can invert more information w.r.t. the input given the results of two adjacent training iterations on the same batch of training data. To simplify the expression, we let $d_k = z^A_k+z^B_k-Y_k$ in the following deduction.
\begin{equation}
\label{eqn:simplify} 
\begin{aligned}
z^A_{k+1}-z^A_k& =X^A_k (W^A_{k+1}-W^A_k) \\
& =-X^A_k \eta ({X^A_k}^T d +\alpha W^A_k) \\
& =-\eta X^A_k({X^A_k}^T d) - \eta \alpha X^A_k W^A_k.
\end{aligned}
\end{equation}
By rearranging the equation, we have
\begin{equation}
\label{eqn:rearrange}
z^A_{k+1}-z^A_k(1 - \eta \alpha ) = \eta X^A_k({X^A_k}^T d).
\end{equation}
\iffalse
and
\begin{equation}
\label{eqn:rearrange1}
{X^A_k}^T d = (\sum_j^m X^A_{k,j1} d_j,...,\sum_j^m X^A_{k,jn} d_n)
\end{equation}
\begin{equation}
\label{eqn:rearrange2}
X^A_k({X^A_k}^T d)=X^A_k{X^A_k}^T d
\end{equation}
\fi
Since B obtains the values of $z^A_{k+1}, z^A_k, \eta, \alpha$ and $d$, B can rewrite Eq.~\eqref{eqn:rearrange} as $X^A {X^A}^T = C$ given $C$ a constant value. This is a quadratic equation of $X^A$ and the equation would leak the private input of A. 

Beyond the information leakage in the training phase, B can obtain more information about A in the testing phase. After the model is trained, B can steal $W^{A}$, the final model parameters of A, by sending $n^A+1$ linearly independent queries to A, and obtain equations:
$z^A_{test}=X^A_{test} W^A.$
B merely needs to act as an ordinary user and send $X^A_{test}$ to A. Since A sends unencrypted $z^A_{test}$ to B to compute the predicted value, B can acquire the value of $W^{A}$ by solving the above equation given a sufficient number of queries.

We consider the model converges in the last iteration of training, and hence the model parameters of the last training iteration $W_{last}^{A}$ can be approximated by $W^{A}$. With the approximation, we obtain the following linear equations on $X^A$:
\begin{equation}
\label{xa}
z^A_{last} \approx X^A W^A,
\end{equation}
which establishes a linear equation about $X^A$ since $z^A_{last}, W^A$ are known. To sum up, during the training and testing phases, B can obtain a quadratic equation Eq.~\eqref{eqn:rearrange} and a linear equation Eq.~\eqref{xa} on the private input of A.

The privacy risk analysis on logistic regression is almost the same except that the loss function is different. In logistic regression, Attacker B has 
$$g^B_K = {X^B_k}^T(A_k-Y_k)+\alpha W^B_k,$$ 
where $A_k = Sigmoid(z^A_k+z^B_k)$. As long as $X^B_k$ has $m_k$ linearly independent row vectors, B can solve the above linear equations to obtain $A_k$. Since Sigmoid function is approximated to polynomial function and B has the value of $z^B$, so it can get the value of $z^A$. Subsequent procedure is the same with the case of linear regression.

For the multi-party case, we assume attacker B colludes with the arbiter. Since the arbiter has the private key, it can get the gradient of the victim in plaintext, as well as the decrypted value of $d_k$ by colluding with B. Considering
$$g_k^A=X^{A^T}d_k+\alpha W_k^A,$$ 
we can take advantage of $W_{k+1}^A=W_{k}^A-\eta g_k^A$ and use two adjacent $g_k^A$s to get the linear equation of $X_k^{A^T}$:
$$g_{k+1}^A-(1-\eta \alpha )g_k^A = X^{A^T}(d_{k+1}-d_k).$$
Similarly, as long as the number of samples is larger than the number of victim's attributes, attacker can invert all the original data of victim's. Logistic regression is similar to the linear regression one with the same reason as the two party circumstance.

\subsection{Privacy Risks in Horizontal FL}
We use the linear regression framework in Sec.~\ref{sec:hfl} to analyze the privacy risks in the horizontal federated learning. We first look into the two-party case and then analyze the multi-party case. 

In each training iteration, Victim A and Attacker B obtain the average weight $W  \in R^{n\times 1}$ for the current round. $X^A \in R^{m \times n}$ is the private input of A. $m$ is the number of training samples and $n$ is the number of features. $Y^A \in R^{m\times 1}$ is the label of A's data. Obtaining the averaged weights, B can easily calculate the gradient of A:
\begin{equation}
\label{eqn:ga}
	g^A= \frac{1}{\eta}(2(W_{k}-W_{k+1}))-g^B.
\end{equation}
Since we have
\begin{equation}
\label{eqn:linear}
\begin{aligned}
g^A& = {X^A}^T (X^A W-Y^A)  \\
& = {X^A}^T (X^A W)- {X^A}^T Y^A,
\end{aligned}
\end{equation}
\begin{equation}
\label{eqn:linear1}
g^A_{k+1}-g^A_k= {X^A}^T (X^A (W_{k+1}-W_k)).
\end{equation}
Since $g^A_{k+1}, g^A_k, W_{k+1}, W_k$ are known to B, Eq.~\eqref{eqn:linear1} can be expressed as quadratic equations ${X^A}^T X^A = C,$ in a similar fashion to that of the vertical federated learning. In the multi-party case, since each participant's local weights are exposed to the arbiter, the case can be reduced to solving Eq.~\ref{eqn:linear1}.

~\\
The attacks to the regression models in the horizontal and vertical FL can be summarized as follows: 

$\diamond$ {\em We have an unknown real matrix $A \in \mathbb{R}^{m \times n}$ and a constant matrix $C  \in \mathbb{R}^{m \times m}$ such that $AA^T=C$. We also have constant vectors $W \in \mathbb{R}^{n \times 1}$ and $Z \in \mathbb{R}^{m \times 1}$ which satisfies $AW=Z$. What are the degrees of freedom of $A$ with/without the set of linear constraints?} 

\iffalse
The logistic regression would yield a more complicated expression on $X^A$:
\begin{equation}
\label{eqn:logi}
\begin{aligned}
g^A& = {X^A}^T (Sigmoid(X^A W)-Y^A)  \\
& = {X^A}^T Sigmoid(X^A W)- {X^A}^T Y^A,
\end{aligned}
\end{equation}
\begin{equation}
\label{eqn:logi1}
g^A_{n+1}-g^A_n= {X^A}^T (Sigmoid(X^A W_{n+1})-Sigmoid(X^A W_n)).
\end{equation}
Since the attack to the logistic regression in horizontal FL has a more complex form \textcolor{red}{and it needs techniques and polynomial function to approximate Sigmoid funtion, we leave this part to the future work. } 

\fi

\subsection{Inverting Private Inputs}
Degrees of freedom $\nu_{A}$ represents the number of variables required to be known for the attacker to invert all elements in $A$. $\nu_{A} = 0$ represents that $A$ can be entirely inverted without any additional information. $\nu_{A} = \infty$ means that the system does not leak any private information of the victim since the attacker obtains an infinite number of solutions given what it can acquire. Typically, $\nu_{A}$ is a finite number, suggesting that given at least $\nu_{A}$ data points of Victim A, Attacker B can invert the entire input of A.

To solve this problem, we first discuss the degrees of freedom of $A$ without the set of linear constraints. Since $C$ is a real symmetric matrix, we only need to care about its upper triangular matrix (row index $i \leq$ column index $j$). To be more specific, we write 
$$A =
\left(
\begin{smallmatrix}
x_{11} & \cdots & x_{1n}\\ \vdots & \ddots & \vdots\\ x_{m1} & \cdots & x_{mn}\\
\end{smallmatrix}
\right) = \left(
\begin{smallmatrix}
A_1 \\ A_2 \\ \vdots \\ A_m \\
\end{smallmatrix}
\right), C=\left(
\begin{smallmatrix}
c_{11} & \cdots & c_{1m}\\ \vdots & \ddots & \vdots\\ c_{m1} & \cdots & c_{mm}\\
\end{smallmatrix}
\right).$$
The quadratic equations can be expressed as 
$$\left\{
\begin{smallmatrix}
\sum\limits_{k=1}^{n}x_{1k}x_{1k}=c_{11} \\   \sum\limits_{k=1}^{n}x_{1k}x_{2k}=c_{12} \\ \vdots \\ \sum\limits_{k=1}^{n}x_{ik}x_{jk}=c_{ij} \\ \vdots \\ \sum\limits_{k=1}^{n}x_{mk}x_{mk}=c_{mm} \\
\end{smallmatrix}
\right. \text{or}  \left\{
	\begin{smallmatrix}
	A_1\cdot A_1=c_{11} \\
	A_1\cdot A_2=c_{12} \\ \vdots \\
	A_i\cdot A_j=c_{ij}  \\ \vdots \\
	A_m\cdot A_m=c_{mm} \\
	\end{smallmatrix}
	\right.$$
We use $\cdot$ to represent the inner product of two vectors. In the below, we will first introduce some lemmas and then our major theorems.

\begin{lemma}\label{lm1}
	The solution set to $AA^T=C$ contains at least one solution.
\end{lemma}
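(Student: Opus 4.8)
The plan is to exhibit an explicit solution rather than to reason abstractly about the equation system. The key observation is that $C$ is not an arbitrary symmetric matrix: in the attack it arises through the identity $C = X^A {X^A}^T$ (up to the known scalars appearing in Eq.~\eqref{eqn:rearrange} and Eq.~\eqref{eqn:linear1}), so $C$ is by construction real, symmetric, and positive semidefinite. Even reasoning purely within the abstract problem, any solution $A$ would give $v^T C v = v^T A A^T v = \|A^T v\|^2 \ge 0$ for every $v \in \mathbb{R}^m$, so positive semidefiniteness of $C$ is a necessary precondition that we are entitled to assume throughout.

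The first step is then to apply the spectral theorem to $C$: write $C = Q \Lambda Q^T$ with $Q \in \mathbb{R}^{m\times m}$ orthogonal and $\Lambda = \mathrm{diag}(\lambda_1,\dots,\lambda_m)$ with all $\lambda_i \ge 0$. Putting $\Lambda^{1/2} = \mathrm{diag}(\sqrt{\lambda_1},\dots,\sqrt{\lambda_m})$ and $B = Q\Lambda^{1/2} \in \mathbb{R}^{m\times m}$ yields $BB^T = Q\Lambda Q^T = C$, so $B$ already solves the quadratic system in the square case $n = m$.

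The second step adjusts the shape to the required $A \in \mathbb{R}^{m\times n}$. Recall that in this setting the attacker always arranges $n \ge m$ (Sec.~\ref{sec:reg_vfl}, by padding with randomly distributed fake features if necessary). Define $A = [\,B \mid 0\,] \in \mathbb{R}^{m\times n}$ by appending $n-m$ zero columns to $B$; then $AA^T = BB^T = C$, so $A$ belongs to the solution set, which is therefore nonempty.

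I do not expect a genuine obstacle here; the only point requiring care is that the claim silently rests on $C$ being positive semidefinite and on $n \ge m$, both of which are guaranteed by the way $C$ and the attack are constructed. For a completely arbitrary symmetric $C$ the statement would be false (take a $C$ with a negative eigenvalue), so these are the natural standing assumptions under which the lemma holds, and the subsequent lemmas and theorems will use this existence result as the starting point for counting the degrees of freedom of the full solution set.
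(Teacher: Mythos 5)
Your proposal is correct and follows essentially the same route as the paper: both arguments observe that $C$ is a Gram matrix by construction (hence positive semidefinite, with the true data matrix already a witness) and then exhibit an explicit factor, the paper via Cholesky decomposition and you via the spectral square root $Q\Lambda^{1/2}$. Your version is marginally more careful in handling the $m\times n$ shape by zero-padding and in only assuming positive semidefiniteness rather than positive definiteness, but the underlying idea is the same.
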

\begin{proof}
	Since $C$ is obtained by multiplying $A$ by $A^{T}$, we know there must exist one solution to $AA^T=C$. Since $C$ is a real-valued symmetric positive definite matrix, we can use Cholesky decomposition to find a particular solution $A^*$ such that $(A^*)(A^*)^T=C$. So the solution set to $AA^T=C$ contains at least one solution.
\end{proof}

\begin{lemma}\label{lm2}
	The degrees of freedom of $AA^T =C$ is at most $\frac{n(n-1)}{2}$.
\end{lemma}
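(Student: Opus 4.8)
The plan is to determine the full solution set of $AA^T = C$ up to the action of the orthogonal group $O(n)$, and then simply read off its dimension. Concretely, I would prove that every solution is obtained from the particular solution $A^*$ supplied by Lemma~\ref{lm1} by right multiplication with an orthogonal matrix. The solution set is then a single $O(n)$-orbit, hence a smooth manifold of dimension at most $\dim O(n) = \frac{n(n-1)}{2}$; since $\nu_{A}$ counts the coordinates of $A$ that still have to be disclosed to locate $A$ inside this set, that dimension is precisely the bound on the degrees of freedom claimed by the lemma.

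Before the main step I would record a rank fact. Since $C$ is symmetric positive definite, $\operatorname{rank}(C) = m$, and because $\operatorname{rank}(AA^T) = \operatorname{rank}(A)$ over $\mathbb{R}$, every solution $A$ --- and $A^*$ in particular --- has full row rank $m$; this forces $m \le n$, which is what leaves ``room'' in $\mathbb{R}^n$ for the freedom described next.

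The heart of the argument is the claim that $A$ solves $AA^T = C$ if and only if $A = A^* Q$ for some $Q \in O(n)$. The ``if'' direction is immediate, since $A^* Q Q^T (A^*)^T = A^*(A^*)^T = C$. For ``only if'', suppose $AA^T = C = A^*(A^*)^T$. The correspondence sending the $i$-th column of $(A^*)^T$ to the $i$-th column of $A^T$ preserves all inner products, because $e_i^T A^*(A^*)^T e_j = C_{ij} = e_i^T AA^T e_j$; by the rank fact both column spaces are $m$-dimensional, so this correspondence extends to a linear isometry between them, and completing orthonormal bases of these two subspaces to orthonormal bases of $\mathbb{R}^n$ extends that isometry to an orthogonal matrix $Q^T$ with $Q^T (A^*)^T = A^T$, i.e.\ $A = A^* Q$. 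Hence the solution set is exactly the orbit $\{A^* Q : Q \in O(n)\}$, which --- being a quotient of the compact Lie group $O(n)$ --- is a smooth manifold of dimension at most $\dim O(n) = \frac{n(n-1)}{2}$, so $\nu_{A} \le \frac{n(n-1)}{2}$.

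I expect the main obstacle to be the ``only if'' half: carefully justifying that the partial isometry between the two $m$-dimensional column spaces extends to a bona fide orthogonal matrix, and noting that for $m < n$ this extension is non-unique, with ambiguity $O(n-m)$ --- which is exactly where the degrees of freedom come from. Two side remarks that make the write-up cleaner: (i) to connect the manifold statement with the operational meaning of $\nu_{A}$, one uses that a generic coordinate projection of a $d$-dimensional manifold onto $\mathbb{R}^d$ is finite-to-one, so revealing $d$ suitably chosen entries of $A$ leaves only finitely many candidates; and (ii) for the exact dimension rather than just the bound, the map $A \mapsto C^{-1/2}A$ identifies the solution set with the Stiefel manifold $\{M \in \mathbb{R}^{m\times n} : MM^T = I_m\}$ of dimension $mn - \frac{m(m+1)}{2}$, and $mn - \frac{m(m+1)}{2} \le \frac{n(n-1)}{2}$ holds because it is equivalent to $(n-m)(n-m-1)\ge 0$, which is true since $m \le n$.
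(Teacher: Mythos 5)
Your proof is essentially correct, but it takes a genuinely different route from the paper's. The paper proves the upper bound constructively: assuming $n-1$ linearly independent rows, it reveals a staircase of $n-1, n-2, \dots, 1$ entries in those rows (hence $\frac{n(n-1)}{2}$ in total), solves the triangular systems $\left(\begin{smallmatrix}A_1\\ \vdots\\ A_t\end{smallmatrix}\right)\cdot A_t^T=\left(\begin{smallmatrix}c_{1t}\\ \vdots\\ c_{tt}\end{smallmatrix}\right)$ row by row to build a basis, and then recovers the remaining $m-n+1$ rows from that basis; this directly exhibits \emph{which} entries to disclose and is what Corollary~\ref{co1} later relies on. You instead characterize the entire solution set globally as the orbit $\{A^*Q : Q\in O(n)\}$ and bound its dimension by $\dim O(n)$. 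Your orbit characterization is correct (and is in fact the same device the paper uses for the matching lower bound in Lemma~\ref{lm3}), and it buys a sharper statement --- the exact dimension $rn-\frac{r(r+1)}{2}$ with $r=\operatorname{rank}(A)$ --- plus a clean conceptual picture of where the freedom lives. What it does not buy for free is the operational content of the lemma: $\nu_A$ is defined as a number of \emph{entries of $A$} that must be revealed, so you still need some choice of $\frac{n(n-1)}{2}$ coordinate functions that is finite-to-one on the orbit; your appeal to ``a generic coordinate projection of a $d$-dimensional manifold is finite-to-one'' is the one step that needs real work (coordinate projections are a measure-zero family of linear maps, so genericity arguments do not apply directly), and the paper's triangular elimination is exactly the explicit certificate that closes this gap. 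One small correction: in the paper's regime $m$ is typically much larger than $n$, so $C=AA^T$ is only positive semidefinite and your deduction that $m\le n$ fails; the orbit argument survives verbatim with $m$ replaced by $r=\operatorname{rank}(A)=\operatorname{rank}(C)$, provided you justify well-definedness of the partial isometry via $\lVert A^T\alpha\rVert^2=\alpha^TC\alpha=\lVert (A^*)^T\alpha\rVert^2$ rather than via linear independence of the columns.
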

\begin{proof}
The proof is divided into two parts. The first part will show the condition of constructing a basis of $n-1$ known row vectors of $A$. The second part will give how the rest $m-n+1$ (assuming $m\geq n-1$) row vectors can be obtained given the basis. We assume there are at least $n-1$ linearly independent row vectors in $A$, and rearrange these row vectors to form a new matrix $B$ such that:
$$B=\left(
\begin{smallmatrix}
A_1 \\ A_2 \\ \vdots \\ A_{n-1} \\
\end{smallmatrix}
\right) = \left(
\begin{smallmatrix}
x_{11} & \cdots & x_{1n}\\ \vdots & \ddots & \vdots\\ x_{n-1,1} & \cdots & x_{n-1,n}
\end{smallmatrix}
\right).$$

Assuming that $x_{11}, x_{12}... x_{1,n-1}$ in $A_1$ are given, we can obtain the value of $x_{1,n}$ by solving the equation $A_1\cdot A_1=c_{11}$. 

\iffalse
Then we assume $x_{21}, x_{22}... x_{2,n-2}$ in $A_2$ are given, the values of $x_{2,n-1}$ and $x_{2,n}$ can be obtained by solving the equations: 
$$\left(
\begin{smallmatrix}
A_1 \\ A_2 \\
\end{smallmatrix}
\right) \cdot A_2^T=\left(
\begin{smallmatrix}
c_{12} \\ c_{22} \\
\end{smallmatrix}
\right)$$
with $A_1$ solved by the previous equation. Now we have obtained $A_1$ and $A_2$. 
\fi
By assuming $n-t, \forall 1\leq t\leq n-1$ variables in $A_t$ as known variables, we can solve the $t$ equations:
\begin{equation} \label{eqn:mxn}
\left(
\begin{smallmatrix}
A_1 \\ A_2 \\ \vdots \\ A_t \\
\end{smallmatrix}
\right) \cdot A_t^T=\left(
\begin{smallmatrix}
c_{1t} \\ c_{2t} \\ \vdots \\ c_{tt} \\
\end{smallmatrix}
\right),
\end{equation}
to get $A_t$. To obtain all $n-1$ row vectors in $B$, we need to know at least $\frac{n(n-1)}{2}$ variables in the upper triangle matrix in $B$. It is worth noting that for each $t (1\leq t\leq n-1)$, if we move all known terms to one side and get the equation as $MX=N$ where $X$ are the unknowns, the coefficient matrix $M$ has to have full row rank to get all unknowns. A further discussion in Corollary~\ref{co1} would depend on this property.

Given the basis constructed by $n-1$ linearly independent vectors, we are able to solve each of the remaining $m-n+1$ unknown vectors of $A$. So far, the matrix $A$ can be completely determined. Therefore, with $\frac{n(n-1)}{2}$ known elements of $A$, one can obtain a finite number of solutions to $AA^T=C$. Since we do not use all the equality constraints of $AA^T=C$, the degrees of freedom of $A$ is at most $\frac{n(n-1)}{2}$.
\end{proof}

\begin{lemma}\label{lm3}
	The degrees of freedom of $AA^T =C$ is at least $\frac{n(n-1)}{2}.$
\end{lemma}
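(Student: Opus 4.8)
The plan is to produce, from any single solution, a $\frac{n(n-1)}{2}$-parameter family of solutions to $AA^{T}=C$; this forces an attacker who knows fewer than $\frac{n(n-1)}{2}$ entries of $A$ to still face infinitely many candidates. The key structural fact is that $AA^{T}$ is the Gram matrix of the rows of $A$, so it is invariant under right multiplication by an orthogonal matrix: if $A^{*}$ is the particular solution guaranteed by Lemma~\ref{lm1} and $Q\in O(n)$, then $(A^{*}Q)(A^{*}Q)^{T}=A^{*}QQ^{T}(A^{*})^{T}=A^{*}(A^{*})^{T}=C$. Conversely, any two matrices whose rows have the same Gram matrix differ by such a $Q$: the map sending the rows of one to the corresponding rows of the other is a well-defined isometry of the common row space and extends to an isometry of $\mathbb{R}^{n}$ because the orthogonal complements have equal dimension. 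Hence the entire solution set is the orbit $S=\{A^{*}Q : Q\in O(n)\}$.

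The second step is to compute the dimension of $S$. In the non-degenerate regime where $C$ --- equivalently $A^{*}$ --- has full rank $n$ (which needs $m\ge n$, consistent with the assumption $m\ge n-1$ used in Lemma~\ref{lm2}), the map $Q\mapsto A^{*}Q$ is injective, since $A^{*}(Q-Q')=0$ forces each column of $Q-Q'$ into the trivial kernel of $A^{*}$. Therefore $S$ is a smooth submanifold of $\mathbb{R}^{m\times n}$ diffeomorphic to $O(n)$, whose dimension is that of the orthogonal group, namely $\frac{n(n-1)}{2}$; equivalently the tangent space at a solution $A_{0}$ is $\{A_{0}K : K^{T}=-K\}$, again of dimension $\frac{n(n-1)}{2}$ by the same injectivity.

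The last step translates ``knowing $k$ entries of $A$'' into geometry: fixing the value of one entry of $A$ is a single affine-linear constraint on $\mathbb{R}^{m\times n}$, so knowing $k$ entries confines the solutions to $S\cap L$ for an affine subspace $L$ of codimension $k$. The true $A$ lies in $S\cap L$, so this set is non-empty, and since slicing a $d$-dimensional algebraic set by one hyperplane lowers its dimension by at most one, every component of $S\cap L$ has dimension at least $\frac{n(n-1)}{2}-k$. For $k<\frac{n(n-1)}{2}$ this is positive, hence infinitely many solutions survive and $A$ cannot be inverted; thus $\nu_{A}\ge\frac{n(n-1)}{2}$. Together with Lemma~\ref{lm2} this pins the degrees of freedom exactly at $\frac{n(n-1)}{2}$.

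I expect the main obstacle to be the converse characterization of the solution set as a single $O(n)$-orbit, together with the bookkeeping around rank: one must invoke the isometry-extension argument carefully and be explicit that $C$ is assumed to have rank $n$, since if $\mathrm{rank}(C)=r<n$ then $S$ is an $O(n)/O(n-r)$ quotient of strictly smaller dimension and the bound must be restated in terms of $r$. The dimension-drop argument is standard, but it should be applied so that the surviving component is the one through the actual matrix $A$, which is where non-emptiness is used.
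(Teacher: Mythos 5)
Your proof is correct and rests on the same central device as the paper's: right-multiplying the particular solution $A^{*}$ from Lemma~\ref{lm1} by an orthogonal matrix to produce an $O(n)$-orbit of solutions, whose $\frac{n(n-1)}{2}$ dimensions give the lower bound. You go further than the paper in checking that $Q\mapsto A^{*}Q$ is injective in the full-rank case, in characterizing the solution set as exactly one orbit, and in translating the orbit dimension into a bound on the number of entries the attacker must know, but the underlying argument is identical.
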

\begin{proof}
By Lemma~\ref{lm1}, we know there must exist one particular solution to $AA^T =C$. Let it be $A^*$. We construct the matrix $D=A^*P$, where $P$ is an $n \times n$ orthogonal matrix. Since $DD^T=(A^*P)(A^*P)^T=A^*PP^T(A^*)^T=(A^*)(A^*)^T=C$, $D$ is also a solution to $AA^T =C$. As the $n$-dimensional orthogonal matrix $P$ has $\frac{n(n-1)}{2}$ degrees of freedom, it can be obtained that $AA^T =C$ has at least $\frac{n(n-1)}{2}$ degrees of freedom.
\end{proof}

\begin{theorem}
	\label{thm1}
	Assuming $A$ contains at least $n-1$ linearly independent row vectors, the degrees of freedom of $AA^T =C$ is $\frac{n(n-1)}{2}.$
\end{theorem}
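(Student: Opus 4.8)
The plan is to obtain Theorem~\ref{thm1} as an immediate consequence of Lemmas~\ref{lm2} and~\ref{lm3} by sandwiching $\nu_A$ between matching bounds. Lemma~\ref{lm3} already gives, with no hypothesis on $A$ beyond the solvability guaranteed by Lemma~\ref{lm1}, that $\nu_A \ge \frac{n(n-1)}{2}$: right-multiplying a fixed particular solution $A^*$ by an arbitrary $n\times n$ orthogonal matrix $P$ yields another solution, so the solution set contains a copy of $O(n)$, which carries $\frac{n(n-1)}{2}$ degrees of freedom, and therefore at least that many independent scalars of side information about $A$ must be supplied before the solution is forced to be unique. For the matching upper bound I would invoke Lemma~\ref{lm2}, whose argument uses exactly the hypothesis of Theorem~\ref{thm1} — that $A$ contains at least $n-1$ linearly independent rows (and implicitly $m\ge n-1$): under it one extracts the full-rank $(n-1)\times n$ submatrix $B$, reveals $\frac{n(n-1)}{2}$ entries in its upper triangle, recovers $B$ up to finitely many choices by solving the triangular family of systems in Eq.~\eqref{eqn:mxn}, and then pins down each of the remaining $m-n+1$ rows of $A$ uniquely from its $n-1$ inner products against the rows of $B$. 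Hence $\nu_A \le \frac{n(n-1)}{2}$, and combining the two bounds gives the claimed equality.

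Concretely the write-up would be three lines: cite Lemma~\ref{lm2} for ``$\le$'', cite Lemma~\ref{lm3} for ``$\ge$'', and conclude. The only point I would spend a sentence making explicit is that the two lemmas are genuinely bounding the same invariant — Lemma~\ref{lm3} lower-bounds the dimension of the variety $\{A : AA^T = C\}$ globally, whereas Lemma~\ref{lm2} produces its bound via one specific elimination order, so one should observe that this elimination is valid precisely on the locus carved out by the independence hypothesis, which is also where the variety attains its maximal dimension; thus no extra component can force $\nu_A$ higher under the hypothesis.

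I do not expect a real obstacle here, since all the arithmetic content lives in Lemmas~\ref{lm1}–\ref{lm3}; the role of the ``$n-1$ linearly independent rows'' assumption is just to ensure that the coefficient matrix $M$ in each step $MX=N$ of Eq.~\eqref{eqn:mxn} has full row rank, which is what lets the elimination — and hence the upper bound — go through, and which is the same full-rank condition flagged for the corollary that follows. If anything needs extra care, it is only the boundary regime $m < n$ (or $m < n-1$), where the submatrix $B$ of $n-1$ independent rows cannot be formed; there the statement is understood under the standing assumption $m \ge n-1$ already used in Lemma~\ref{lm2}, and in the federated-learning application this is the realistic regime (the attacker arranges to have more samples than the victim has attributes), so it is not a restriction in practice.
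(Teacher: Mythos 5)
Your proposal is correct and follows essentially the same route as the paper, which likewise obtains Theorem~\ref{thm1} immediately by combining the upper bound of Lemma~\ref{lm2} with the lower bound of Lemma~\ref{lm3}. The extra observations you add (that both lemmas bound the same invariant, and that the hypothesis of $n-1$ independent rows presupposes $m\ge n-1$) are reasonable clarifications but do not change the argument.
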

The conclusion is obvious by Lemma~\ref{lm2} and Lemma~\ref{lm3}.

From the proof of Lemma~\ref{lm2}, we can tell how to obtain each element of $A$. First, we need to construct the basis of $n-1$ known row vectors. Given the basis, we can obtain all unknowns in the remaining $m-n+1$ row vectors with no other strings attached. The necessary condition of constructing the basis is as follows: $A$ contains at least $n-1$ linearly independent row vectors, and in these $n-1$ linearly independent row vectors, at least $\frac{n(n-1)}{2}$ variables have to be known.

However, $A$ is not guaranteed to be inverted given only the necessary conditions. For example, in the case where $m=3$ and $n=3$, and $A \in \mathbb{R}^{3 \times 3},$ we have $A_1=(x_{11},x_{12},x_{13})$ and $A_2=(x_{21},x_{22},x_{23})$ being  linearly independent, and we know the value of $x_{11}$, $x_{21}$, and $x_{31}$. It is easy to find that with thosen known values, $A$ cannot be uniquely determined. Typically the couplings between different elements in the matrix is quite complicated, and hence it raises requirements to the positions of the given values if we want to restore the matrix through $\frac{n(n-1)}{2}$ known elements. In the following, we give sufficient conditions of inverting $A$:
\begin{corollary} \label{co1}
	To invert $A$ under the constraint of $AA^T =C$, the following conditions need to be satisfied:
\begin{enumerate}
	\item $A$ contains at least $n-1$ linearly independent row vectors, of which $\frac{n(n-1)}{2}$ elements are known. In the $(n-t)$-th row vector, the number of known elements should be $n-t (1 \leq t \leq n-1)$.
	\item For $1 \leq t \leq n-1,$ if we rearrange all known items in
	$$\left(
	\begin{smallmatrix}
	A_1 \\ A_2 \\ \vdots \\ A_{t-1} \\
	\end{smallmatrix}
	\right) \cdot A_t^T=\left(
	\begin{smallmatrix}
	c_{1t} \\ c_{2t} \\ \vdots \\ c_{t-1,t} \\
	\end{smallmatrix}
	\right),$$ 
	to one side to compose the new equation $MX=N$, $M$ has full row rank.
\end{enumerate}
\end{corollary}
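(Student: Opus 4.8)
The plan is to promote the constructive recovery procedure in the proof of Lemma~\ref{lm2} to a genuine algorithm whose output is a \emph{finite} solution set, and to check that conditions (1)--(2) are precisely what keeps every step of that procedure well-posed. First I would invoke condition (1) to reorder the $n-1$ linearly independent rows: call them $A_1,\dots,A_{n-1}$ in the order in which we will process them, arranged so that $A_t$ carries exactly $n-t$ known entries, hence exactly $t$ unknown entries. Condition (1) is exactly the statement that such an ordering exists, and the running total $\sum_{t=1}^{n-1}(n-t)=\tfrac{n(n-1)}{2}$ matches the budget counted in Lemma~\ref{lm2}. The $m-n+1$ rows outside this basis will be handled at the end.

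Next I would run the induction on $t$. Suppose $A_1,\dots,A_{t-1}$ have already been narrowed to finitely many candidate values, and fix one such candidate. The equations $A_i\cdot A_t=c_{it}$ for $i=1,\dots,t-1$ are then \emph{linear} in the $t$ unknown entries of $A_t$ (their coefficients are the now-known entries of the $A_i$'s), and moving the constants to one side produces the system $MX=N$ of condition (2). Condition (2) says $M\in\mathbb{R}^{(t-1)\times t}$ has full row rank, so $\ker M$ is spanned by a single nonzero real vector $v$ and the solutions form an affine line $X=X_0+sv$. Plugging this into the one remaining equation $A_t\cdot A_t=c_{tt}$ gives $\lVert v\rVert^2 s^2+2(X_0\cdot v)s+\lVert X_0\rVert^2-c_{tt}=0$; since $v\neq 0$ this is honestly quadratic, so it has at most two and, because the true $A_t$ is among the candidates (Lemma~\ref{lm1} guarantees the whole system is consistent), at least one real root. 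Hence $A_t$ is pinned to at most two values, and by induction the basis $B$ formed by $A_1,\dots,A_{n-1}$ is pinned to finitely many.

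With $B$ fixed, I would recover each remaining row $A_j$ with no further disclosed entries: the $n-1$ equations $A_i\cdot A_j=c_{ij}$ ($i=1,\dots,n-1$) read $BA_j^{\mathrm T}=(c_{1j},\dots,c_{n-1,j})^{\mathrm T}$, and since $B$ has rank $n-1$ this locates $A_j^{\mathrm T}$ on a one-dimensional affine set; intersecting with the quadratic $A_j\cdot A_j=c_{jj}$ again leaves at most two choices, which is what the phrase ``no other strings attached'' preceding the corollary refers to. Multiplying the finitely many choices over all rows and all branch selections, the set of matrices consistent with $C$ and with the $\tfrac{n(n-1)}{2}$ revealed entries is finite, which is the sense in which $A$ is inverted (cf.\ Theorem~\ref{thm1}).

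The step I expect to be the main obstacle is making condition (2) airtight throughout the induction, since the matrix $M$ at stage $t$ is assembled from rows $A_1,\dots,A_{t-1}$ that were themselves only recovered up to a sign/branch ambiguity; one must either show the full-row-rank property survives along every branch, or argue that it is enough to follow the single branch containing the victim's true data — which is all the attacker needs. A lesser point still deserving an explicit sentence is excluding degenerate quadratic steps: as noted, $\lVert v\rVert^2 s^2+\dots$ cannot collapse to an identity once $v$ is a nonzero real vector, so the nontriviality of each such step follows from the full-rank hypothesis but should be written down rather than assumed. Finally I would note that ``inverting'' here means collapsing to a finite candidate set, the residual sign choices being resolvable, when present, using the auxiliary linear constraint $AW=Z$.
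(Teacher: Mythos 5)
Your proposal is correct and follows essentially the same route as the paper, which simply defers this corollary to the constructive procedure in the proof of Lemma~\ref{lm2}: order the $n-1$ independent rows so that the $t$-th carries $n-t$ known entries, solve the linear system $MX=N$ guaranteed well-posed by condition (2), intersect with the diagonal quadratic, and then fill in the remaining $m-n+1$ rows from the basis. Your added details --- the one-dimensional kernel giving an affine line, the at-most-two roots of the resulting quadratic, and the caveat that full row rank must persist along every branch of the recovered candidates --- are refinements of that same argument rather than a different approach.
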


The proof can be referred to the proof of Lemma~\ref{lm2}. 

The two conditions provide a practical way for the attacker to invert the victim's input. The 1st condition does not impose the column position requirement on the known elements within a row vector. The attacker only needs to obtain arbitrary $n-t$ known elements in the $(n-t)$-th row. In the 2nd condition, we do not impose any restriction on the order of row vectors, as long as $M$ can be inverted to form the basis.
\begin{figure}[H]
	\centering
	\includegraphics[width=0.9\linewidth]{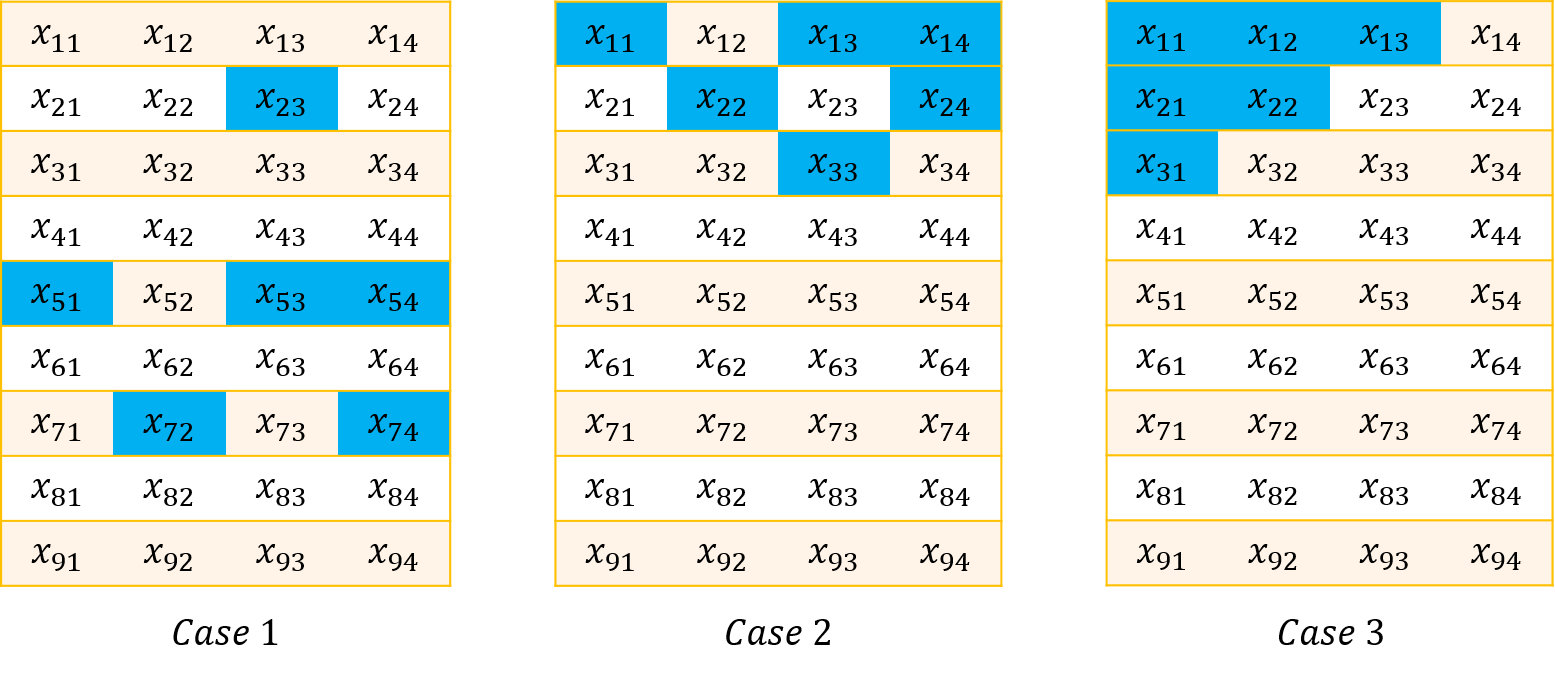}
	\caption{Three cases of known element positions.}
	\label{case}
\end{figure}
\vspace{-3mm}

For instance, $A$ is of shape $9 \times 4$ and  has degrees of freedom $6$. We assume elements in blue are known. Fig.~\ref{case} shows three cases of known positions, and they all satisfy the conditions in Corollary~\ref{co1}. $Case$ $1$ turns into $Case$ $2$ through row transformation. $Case$ $2$ and $Case$ $3$ are equivalent in solving equations. In real-world scenarios, since $A$ is a real-valued matrix and each value contains fractional digits, it is a rare condition where the vectors in the matrix are (closely) linear dependent. Therefore, the conditions in Corollary~\ref{co1} are reasonable.

\subsection{Inversion with Linear Constraints}
Now we consider the case with linear constraints $AW=Z$. The linear equations can be expressed as 
$$
\left\{
\begin{smallmatrix}
	\sum\limits_{k=1}^{n}w_{k}x_{1k}=z_{1} \\   \sum\limits_{k=1}^{n}w_{k}x_{2k}=z_{2} \\ \vdots \\ \sum\limits_{k=1}^{n}w_{k}x_{mk}=z_{m} \\
\end{smallmatrix}
\right. \text{or} \left(
\begin{smallmatrix}
	A_1 \\ A_2 \\ \vdots \\ A_m \\
\end{smallmatrix}
\right) \cdot W = \left(
\begin{smallmatrix}
	z_1 \\ z_2 \\ \vdots \\ z_m \\
\end{smallmatrix}
\right).$$
This is the case where the attacker obtains the model weights of the victim in the inference phase. Together with the quadratic equations, we want to find out the degrees of freedom of this system.

\begin{lemma} \label{lm4}
	The degrees of freedom of $AA^T =C$ and $AW=Z$ is at most $\frac{(n-1)(n-2)}{2}.$
\end{lemma}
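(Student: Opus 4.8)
The plan is to adapt the construction in the proof of Lemma~\ref{lm2}, using the observation that the system $AW=Z$ contributes one extra scalar equation $A_i\cdot W = z_i$ for every row $A_i$, so that each row of the basis now needs one fewer prescribed entry. First I would again assume $A$ has at least $n-1$ linearly independent rows (so in particular $m\ge n-1$), reorder them into a matrix $B$ with rows $A_1,\dots,A_{n-1}$, and process these rows in the order $t=1,2,\dots,n-1$. When solving for $A_t$, in place of the $t$ quadratic equations $A_s\cdot A_t=c_{st}$ ($1\le s\le t$) used in Lemma~\ref{lm2} I would also append the linear equation $A_t\cdot W=z_t$, obtaining $t+1$ equations. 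Since $A_1,\dots,A_{t-1}$ have been determined by this stage, moving all known quantities to the right-hand side leaves $t-1$ linear equations (from $A_s\cdot A_t=c_{st}$ with $s<t$), one further linear equation (from $A_t\cdot W = z_t$), and the single quadratic equation $A_t\cdot A_t=c_{tt}$; after assuming that $n-t-1$ entries of $A_t$ are known, this is a system of $t$ linear and one quadratic equation in the $t+1$ unknown entries of $A_t$, which has finitely many solutions — the linear part cuts the unknowns down to a line and the quadratic then admits at most two roots. Note that at $t=n-1$ the count gives $n-t-1=0$, i.e.\ no entry of $A_{n-1}$ need be assumed known at all.

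Summing the prescribed entries over the basis rows yields $\sum_{t=1}^{n-1}(n-t-1)=\sum_{k=1}^{n-2}k=\frac{(n-1)(n-2)}{2}$, which is exactly $n-1$ fewer than the $\frac{n(n-1)}{2}$ of Lemma~\ref{lm2}, as one expects from eliminating one unknown per basis row. Once $B$ is fixed, the remaining $m-n+1$ rows are recovered exactly as in Lemma~\ref{lm2}: each such $A_i$ satisfies the $n-1$ linear equations $A_s\cdot A_i=c_{si}$ ($1\le s\le n-1$) against the now-known basis, together with the quadratic $A_i\cdot A_i=c_{ii}$ (and the extra $A_i\cdot W=z_i$, which can only help), so it is pinned to finitely many values without consuming any new prescribed entry. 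Hence fixing $\frac{(n-1)(n-2)}{2}$ entries of $A$ determines the whole matrix up to finitely many candidates, which is the claimed upper bound; nonemptiness of the solution set is inherited from the fact that the true input $X^A$ satisfies both $AA^T=C$ and $AW=Z$, so the count is well posed. I also expect this bound to be tight, by the same device as Lemma~\ref{lm3} with $P$ ranging over orthogonal matrices fixing $W$ (a group of dimension $\frac{(n-1)(n-2)}{2}$), but the lemma only asks for \emph{at most}.

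The main obstacle is the rank/genericity issue already flagged after Lemma~\ref{lm2} and made explicit in Corollary~\ref{co1}, which is slightly more delicate here because at each step the coefficient matrix mixes rows of $A$ with the fixed vector $W$. Concretely, to finish step $t$ I need the stacked coefficient matrix with rows $A_1,\dots,A_{t-1},W$, restricted to the columns of the unknown entries, to have full row rank — in particular $W\notin\mathrm{span}\{A_1,\dots,A_{t-1}\}$ at the step where that would drop the rank — and the final quadratic must not be tangent to the resulting line. For real-valued data whose entries carry fractional digits these degeneracies live on a measure-zero set, so the argument carries through just as in the constraint-free case; I would state this caveat explicitly rather than try to certify it in full generality.
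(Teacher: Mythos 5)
Your proof is correct and follows essentially the same route as the paper's: both exploit the fully known vector $W$ so that each basis row requires one fewer prescribed entry, saving $n-1$ knowns relative to Lemma~\ref{lm2}. The only cosmetic difference is that the paper substitutes $W^T$ for $A_1$ as the first basis row, whereas you keep $A_1,\dots,A_{n-1}$ and append the linear equation $A_t\cdot W=z_t$ to each step's system; the count $\frac{(n-1)(n-2)}{2}$ comes out the same either way.
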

\begin{proof}
	We assume there are at least $n-1$ linearly independent row vectors in the system. We treat $W^T$ as a set of known row vectors and replace the row vector which has $n-1$ known values in $B$ (assuming that there are $n-1$ row vectors of $B$ linearly independent):
	$B=\left(
	\begin{smallmatrix}
		W^T \\ A_2 \\ \vdots \\ A_{n-1} \\
	\end{smallmatrix}
	\right)$. Following exactly the same derivation with Lemma~\ref{lm2}, we remove $n-1$ unknows from $A$ since $W^T$ has replaced $A_1$. So the degree of freedom is reduced by $n-1$. Since Lemma~\ref{lm2} has shown that the degrees of freedom is at most $\frac{n(n-1)}{2}$, the degrees of freedom under linear constraints are at most $\frac{(n-1)(n-2)}{2}$.
\end{proof}

\begin{lemma} \label{lm5}
	The degrees of freedom of $AA^T =C$ and $AW=Z$ is at least $\frac{(n-1)(n-2)}{2}.$
\end{lemma}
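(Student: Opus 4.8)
The plan is to mirror the orbit construction in the proof of Lemma~\ref{lm3}, but to restrict the family of orthogonal transformations to those that are \emph{compatible} with the extra linear constraint $AW=Z$. Let $A^*$ be a particular solution of the combined system, i.e.\ $A^*(A^*)^T=C$ and $A^*W=Z$; such a solution exists because the victim's true input matrix is one of them (by construction $C$ and $Z$ arise from it). We may assume $W\neq 0$, since otherwise $Z=0$, the linear constraint is vacuous, and Lemma~\ref{lm3} already gives a strictly larger bound.

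First I would identify the relevant transformations. For any $n\times n$ orthogonal matrix $P$, the matrix $D=A^*P$ satisfies $DD^T=A^*PP^T(A^*)^T=A^*(A^*)^T=C$, exactly as in Lemma~\ref{lm3}; and if in addition $PW=W$, then $DW=A^*(PW)=A^*W=Z$, so $D$ also satisfies the linear constraint. Hence I would work with the stabilizer subgroup $H=\{P\in O(n):PW=W\}$. Choosing an orthonormal basis of $\mathbb{R}^n$ whose first vector is $W/\|W\|$, every $P\in H$ has block form $\mathrm{diag}(1,Q)$ with $Q\in O(n-1)$, so $H\cong O(n-1)$ is a smooth manifold of dimension $\tfrac{(n-1)(n-2)}{2}$.

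Next I would show that $P\mapsto A^*P$ embeds $H$ into the solution set of the combined system. Injectivity follows whenever $A^*$ has full column rank $n$: then $A^*P_1=A^*P_2$ forces $A^*(P_1-P_2)=0$, hence $P_1=P_2$. Since the map is smooth and injective, the solution set contains a submanifold of dimension $\dim H=\tfrac{(n-1)(n-2)}{2}$, which is precisely the claim: the degrees of freedom of $AA^T=C$ together with $AW=Z$ is at least $\tfrac{(n-1)(n-2)}{2}$. Together with Lemma~\ref{lm4} this pins the value at exactly $\tfrac{(n-1)(n-2)}{2}$.

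The main obstacle is the rank/dimension-counting step. Under the standing assumption we only know $A$ has $n-1$ linearly independent rows, i.e.\ rank at least $n-1$; if the rank were exactly $n-1$ the orbit $\{A^*P:P\in H\}$ could a priori collapse in dimension, so one must rule this out. As argued in the discussion following Corollary~\ref{co1}, for real-valued data it is essentially never the case that the rows are exactly linearly dependent, so $A^*$ has full column rank $n$ and the $\tfrac{(n-1)(n-2)}{2}$ free parameters of $H$ genuinely correspond to $\tfrac{(n-1)(n-2)}{2}$ independent degrees of freedom of $A$.
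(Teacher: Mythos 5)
Your proof is correct, but it takes a genuinely different route from the paper's. The paper does not restrict the orthogonal orbit; instead it augments the unknown matrix with the constraint vector, forming $F=(A^T,W)^T$ and observing that $G=FF^T=\left(\begin{smallmatrix}C & Z\\ Z^T & W^TW\end{smallmatrix}\right)$ is entirely known to the attacker. It then applies Theorem~\ref{thm1} to $F$ to get $\frac{n(n-1)}{2}$ degrees of freedom, and argues that the known row $W^T$ can remove at most $n-1$ of them (since in the basis construction the first row only ever contributes $n-1$ free parameters), leaving at least $\frac{(n-1)(n-2)}{2}$. Your argument instead extends the orbit construction of Lemma~\ref{lm3} to the stabilizer $H=\{P\in O(n): PW=W\}\cong O(n-1)$, and directly exhibits an injectively embedded $\frac{(n-1)(n-2)}{2}$-dimensional family of solutions $A^*P$. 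What your approach buys is a self-contained, explicitly parametrized solution family that is arguably tighter than the paper's ``reduces by at most $n-1$'' counting step; what the paper's approach buys is that the matrix $G$ it constructs is exactly the object the attacker can compute, so the same augmentation feeds directly into the upper bound of Lemma~\ref{lm4} and the practical inversion procedure. One remark on your final caveat: the potential collapse when $\mathrm{rank}(A^*)=n-1$ can be ruled out without appealing to genericity, because the stabilizer $\{P\in H: A^*P=A^*\}$ forces $P^T$ to fix the $(n-1)$-dimensional row space of $A^*$ pointwise and hence is a finite (at most two-element) group, so the orbit dimension is still $\dim H=\frac{(n-1)(n-2)}{2}$.
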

\begin{proof}
	We construct matrix $F=(A^T, W)^T$, where $F \in \mathbb{R}^{(m+1) \times n}$. Let 
	$$G=FF^T=
	\left(
	\begin{smallmatrix}
		AA^T & AW \\ W^TA^T & W^TW
	\end{smallmatrix}
	\right)=
	\left(
	\begin{smallmatrix}
		C & Z \\ Z^T & W^TW
	\end{smallmatrix}
	\right),
	$$ where $G \in \mathbb{R}^{(m+1) \times (m+1)}$. Since the values of $C$, $Z$ and $W^TW$ are known, we know all elements in $G$. By Thm.~\ref{thm1}, the degrees of freedom of $F$ is $\frac{n(n-1)}{2}$ under the constraint $FF^T=G$. By constructing a basis containing $W^T$ as a row vector, the degrees of freedom of $F$ reduces by at most $n-1$, since there are $n$ known values in $W^T$. Hence the degrees of freedom of $F$ is at least $\frac{(n-1)(n-2)}{2}$.
\end{proof}

\begin{theorem} \label{thm2}
	Assuming $(A^T, W)^T$ has at least $n-1$ linearly independent row vectors, the degrees of freedom of $AA^T =C$ and $AW=Z$ is $\frac{(n-1)(n-2)}{2}.$
\end{theorem}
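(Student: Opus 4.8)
The plan is to mirror the argument used for Theorem~\ref{thm1}: establish matching upper and lower bounds on the degrees of freedom and conclude that together they pin the value down exactly. Concretely, I would invoke Lemma~\ref{lm4} to get that the degrees of freedom of the joint system $AA^T=C$, $AW=Z$ is at most $\frac{(n-1)(n-2)}{2}$, and Lemma~\ref{lm5} to get that it is at least $\frac{(n-1)(n-2)}{2}$. Since both lemmas are stated under the same structural hypothesis invoked in Theorem~\ref{thm2} --- namely that $(A^T,W)^T$ contains at least $n-1$ linearly independent row vectors --- the two bounds hold simultaneously, and squeezing yields the claimed equality.

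Before quoting the lemmas I would double-check that their hypotheses really coincide with the hypothesis of the theorem. Lemma~\ref{lm4} phrases the assumption as ``$n-1$ linearly independent row vectors in the system'' and then uses $W^T$ in place of one of the basis rows of $B$; Lemma~\ref{lm5} forms $F=(A^T,W)^T$ and applies Theorem~\ref{thm1} to $F$, which requires $F$ to possess $n-1$ linearly independent rows. Both amount precisely to the statement that $(A^T,W)^T$ has row rank at least $n-1$ with $W^T$ usable as one of the independent rows, so the hypotheses align and no additional case analysis is needed.

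The only place I expect friction is reconciling the counting in the lower bound. Lemma~\ref{lm5} argues that building a basis containing $W^T$ removes ``at most $n-1$'' unknowns, which on its own only yields the lower bound $\frac{n(n-1)}{2}-(n-1)=\frac{(n-1)(n-2)}{2}$ rather than an exact count. This is harmless here exactly because the upper bound supplied by Lemma~\ref{lm4} is the same number, so the ``at most $n-1$'' reduction in Lemma~\ref{lm5} is forced to be tight; but if one wanted a self-contained derivation one would have to verify that the $n$ known entries of $W^T$ genuinely eliminate $n-1$ (and not fewer) independent degrees of freedom in the orthogonal-matrix parametrisation of the solution set of $FF^T=G$. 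In short, Theorem~\ref{thm2} is an immediate corollary of Lemmas~\ref{lm4} and~\ref{lm5}, and the substantive work has already been carried out in proving those two lemmas.
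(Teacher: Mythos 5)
Your proposal matches the paper's own argument exactly: the paper proves Theorem~\ref{thm2} by combining the upper bound of Lemma~\ref{lm4} with the lower bound of Lemma~\ref{lm5} and noting the two coincide. Your additional remarks about aligning the hypotheses and about the tightness of the ``at most $n-1$'' reduction are sensible sanity checks but do not change the route.
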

The conclusion can be easily obtained by Lemma~\ref{lm4} and Lemma~\ref{lm5}.

Similarly, we can deduct the sufficient conditions of inverting $A$ from both the quadratic and linear constraints. We do not repeat it here.

So far we have answered the question proposed at the beginning of this section. The attacks to the regression models in the horizontal and vertical FL can be summarized as follows: $m$ represents the number of training samples, and $n$ denotes the number of attributes owned by the victim. In vertical FL, the attacker has an unknown real matrix $A \in \mathbb{R}^{m \times n}$ and a constant matrix $C  \in \mathbb{R}^{m \times m}$ such that $AA^T=C$. Typically, $m$ is much larger than $n$, and thus it is reasonable for the attacker to acquire $O(n^{2})$ of data points to invert $O(mn)$ data points of the victim. In horizontal FL, the attacker has an unknown real matrix $A \in \mathbb{R}^{m \times n}$ and a constant matrix $C  \in \mathbb{R}^{n \times n}$ such that $A^TA=C$. The case is different from vertical FL in that the attacker has to acquire $O(m^{2})$ data points to invert $O(mn)$ data points of the victim. Since we usually have more data records than attributes, such a requirement is not easy to fulfill.

\section{Attacks to Decision Tree} \label{sec:tree}
In this section, we first analyze the privacy risks faced by SecureBoost, following the setting in Sec.~\ref{seb}. We found that even with encrypted data exchange, private input of the victim can still be leaked during training. Then we introduce our designed attack.

\subsection{Privacy Risks}
During the training of SecureBoost, the active party A with labels leads the training process, other passive parties without labels provide possible dividing methods according to their own attributes. After generating the final model, party A acquires the structure of subtrees, including leaf nodes and each internal node to which party it belongs. We regard A as the attacker and passive parties as the victims and aim to find out the privacy risk of each victim.

At the inference phase, the attacker repeatedly submit queries to obtain the attribute of each node. We particularly design attacks in the federated learning, as the attacker is able to find out the path that each training instance traverses from root to leaf. With the node attributes recovered in the inference phase, the attacker is able to tell the ranges of the victim's training input according to its traversed path on the tree.

\begin{figure}[H]
	\centering
	\includegraphics[width=0.9\linewidth]{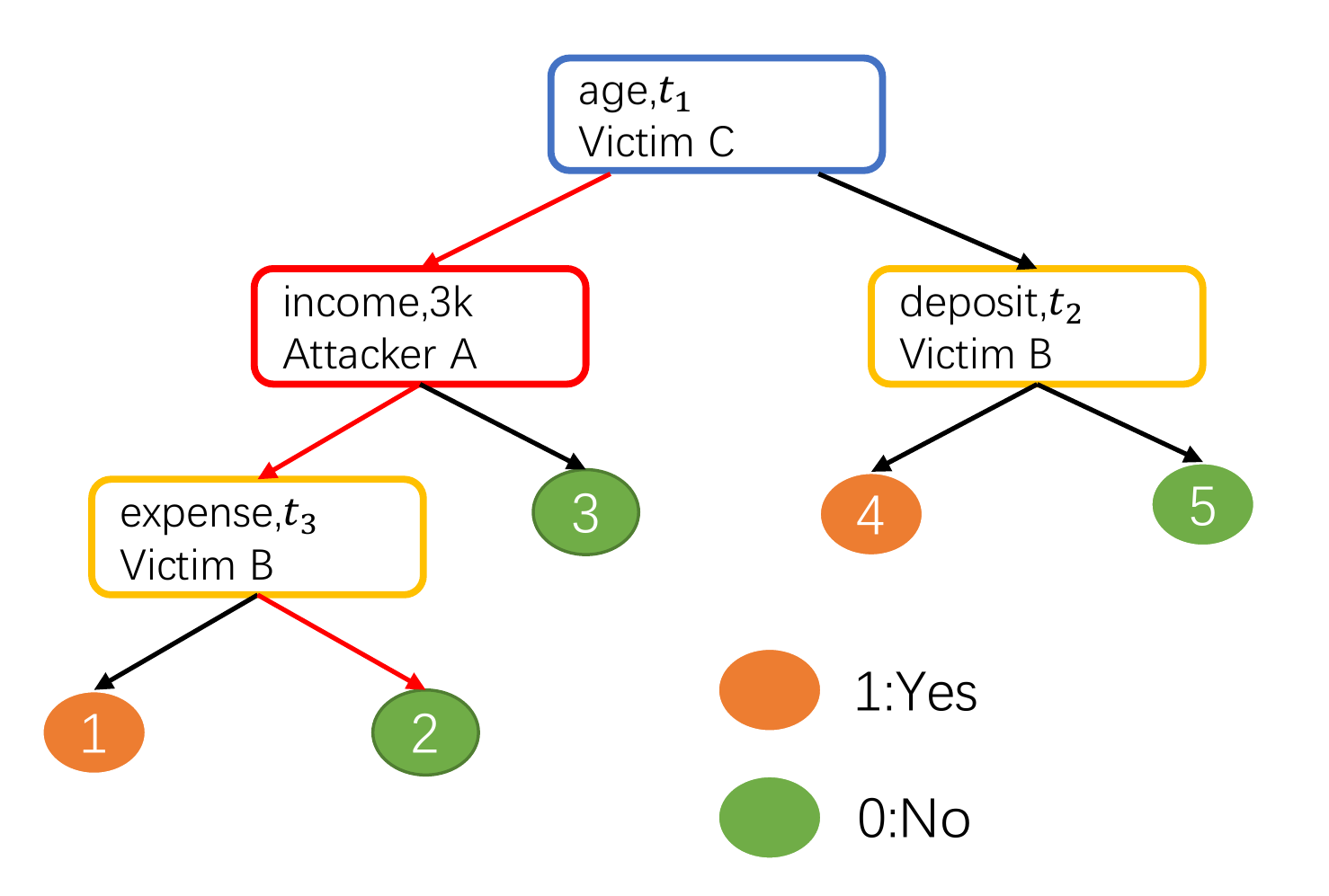}
	\caption{SecureBoost Attack}
	\label{steal}
\end{figure}
\vspace{-3mm}

\subsection{Attack Design}
A critical step of our attack is to find out the threshold of the attribute represented by each node at the inference phase. To achieve this, the attacker sends an arbitrary query initially to the root of a tree at the inference phase, and selects the path in a top-down manner. The victim participates in this process and decides the path that the query instance has to travel at its nodes. Observing how the victim deals with the query, the attacker sends a second one with a smaller range trying to approach the dividing point of the victim's nodes. This process iteratively carries on with a narrowing range until reaching the required accuracy.

We illustrate the procedure by an example in Fig.~\ref{steal}. The red node belongs to the attacker and the yellow ones and the blue one belong to the victims. The green and orange nodes are all leaves (outputs). The attacker aims to obtain values of $t_1$, $t_2$, $t_3$. Initially, the attacker denotes the upper bound and lower bound of the input feature as UB and LB. For example, for the age feature, we let UB=100, LB=0. Then the attacker generates $m$ input queries uniformly distributed across the interval. Letting $m=11$, the attacker generates a sequence in $[0,100]$ with a common difference $\frac{100-0}{m-1}=10$. The attacker feeds the $11$ input instances to the decision tree and identifies the input leading to the jump from $\{1,2,3\}$ to $\{4,5\}$ in the output. Assuming such a jump occurs at the input interval $[10,20]$, the attacker further divides such an interval to create new input instances until the range value is less than the precision threshold $\epsilon$. The arithmetic mean of the final interval can be considered as the dividing point of the node. The entire procedure of the SecureBoost attack can be found in Alg.~\ref{alg:secureboostatk}.

Given the precision threshold, UB and LB, the attacker needs to send $n_{q}$ queries to be able to obtain the input range of the victim:
$$n_{q}=\textrm{ceil}( \log_{m-1}{\frac{UB-LB}{\epsilon}} ).$$
Let the number of victim nodes be $n_{v}$, then the total number of queries required to attack a secureboost model is 
$$N_{Q}=n_{v}\times n_{q}.$$

\begin{algorithm}[H] 
	\renewcommand{\algorithmicrequire}{\textbf{Input:}}
	\renewcommand{\algorithmicensure}{\textbf{Output:}}
	\caption{\textbf{SecureBoost Attack}} 
	\label{alg:secureboostatk}
	\begin{algorithmic}[1] 
		\REQUIRE Information of all trees: $tree\_info$, including node ID, whether it is an attacker node, whether the dividing point is known, feature ID of the node, the weights of its left child and its right child; the path of each tree from the root node to the leaf node: $tree\_paths$; random input queries generated: $q$;
		\ENSURE A stack of nodes that satisfy the threshold condition: $unknown\_nodes$
		\FOR{$tree$ in $tree\_info$}
		\FOR{$path$ in $tree\_paths$[$tree$]}
		\STATE $unknown\_nodes$ $\leftarrow \{\}$;
		\FOR{$node$ in $path$}
		\IF{$node$ is an attacker node \OR dividing point of $node$ is known}
		\STATE Change $q$ to a value so that $q$ can go down the path;
		\ELSE
		\STATE $unknown\_nodes$.push($node$);
		\ENDIF
		\ENDFOR
		\WHILE{not $unknown\_nodes$.empty()}
		\STATE $node \leftarrow$ $unknown\_nodes$.pop;
		\WHILE{$UB - LB > \epsilon$}
		\STATE query array = generate\_query (query $q$, feature ID, LB, UB); \COMMENT{divide the interval of the targeted feature}
		\STATE attacker features, victim features = split\_features (query array);
		\STATE results = make\_prediction (attacker features, victim features) \COMMENT{get leaf ids}
		\STATE LB, UB= find\_split (query array, results) \COMMENT{find the dividing points from results}
		\ENDWHILE
		%\STATE $node$.diving point =  Block Search(feature ID, $q$);
		\STATE Update $tree\_info$;
		\ENDWHILE
		\ENDFOR
		\ENDFOR
	\end{algorithmic} 
\end{algorithm}

\section{Experiment}
We conduct a set of experiments to see how our attacks perform in real-world federated learning systems. The experimental results are provided in this section.

\begin{table}[]
	\caption{Description of Datasets}
	\label{table:datasets}
	\scalebox{0.66}{
		\begin{tabular}{|l|l|l|l|}
			\hline
			Dataset                                                         & Description                                                                                                                                                                            & Attributes & Samples \\ \hline
			Breast cancer                                                    & Follow-up data for breast cancer cases.                                                                                                                                             & 30         & 198     \\ \hline
			Motor-tempreture                                                & Motor sensor data.                                                                                                              & 11         & 998070  \\ \hline
			Vehicle scale                                                   & \begin{tabular}[c]{@{}l@{}}3D objects within a 2D image by application of an shape\\ feature extractors to the 2D silhouettes of the objects.\end{tabular}                & 18         & 946     \\ \hline
			Iris                                                            & Iris species with 50 samples each and some properties.                                                                                                                                 & 4          & 150     \\ \hline
			\begin{tabular}[c]{@{}l@{}}Residential\\ -building\end{tabular} & \begin{tabular}[c]{@{}l@{}}Construction cost, sale prices, and so on corresponding \\to real estate single-family residential apartments\end{tabular} & 107        & 372     \\ \hline
			Fish                                                            & A record of fishes in fish market sales.                                                                                                                               & 5          & 35      \\ \hline
			Red-wine                                                        & Red variant of the Portuguese "Vinho Verde" wine.                                                                                                                                      & 11         & 2449    \\ \hline
			House                                                           & Each record describes a Boston suburb or town.                                                                                                                         & 6          & 505     \\ \hline
			
		\end{tabular}
	}
\end{table}

\begin{figure}[th]
	\centering
	\includegraphics[width=0.7\linewidth]{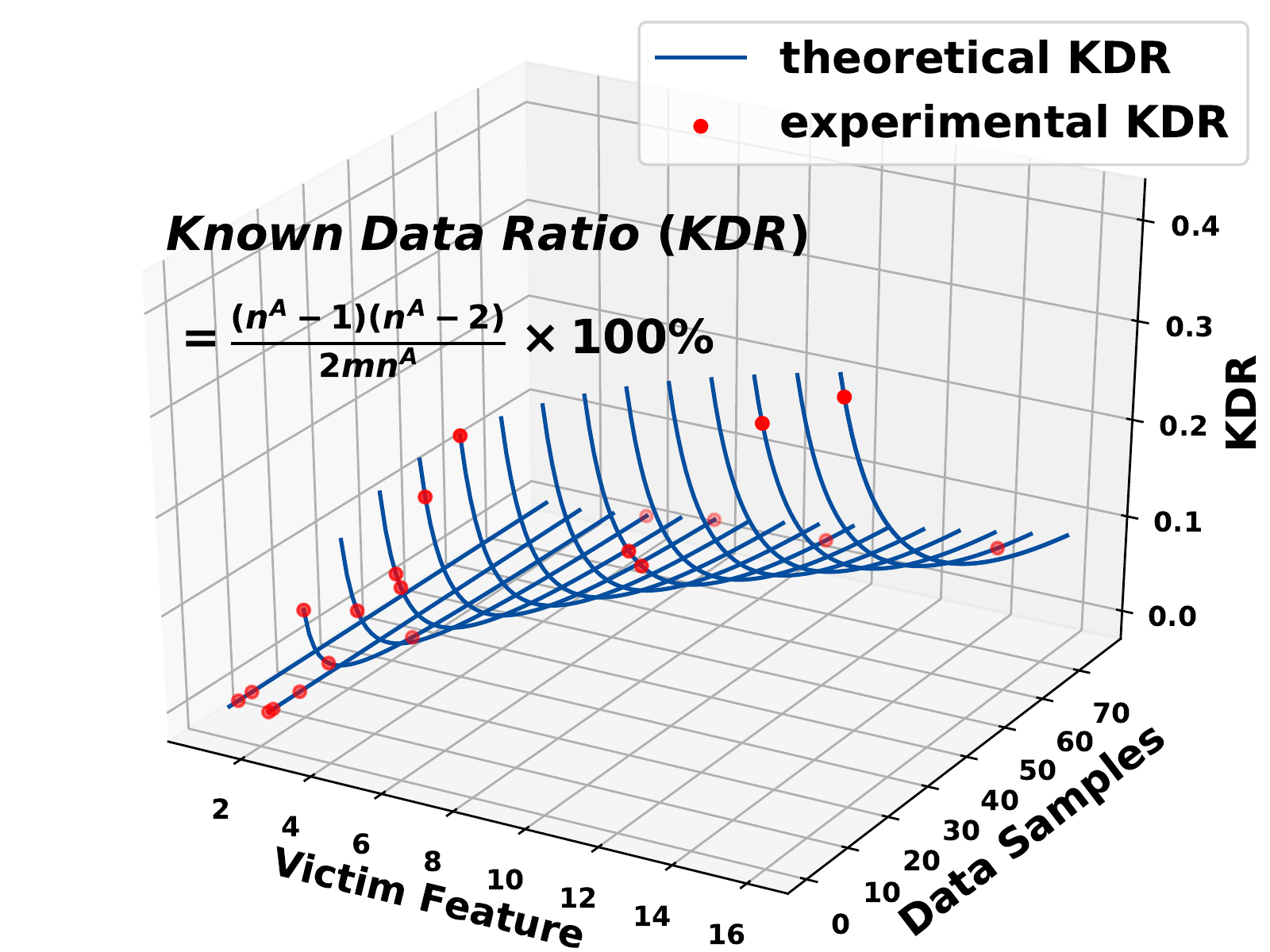}
	\caption{Known data ratio (in vertical FL).}
	\label{fig:KDR}
\end{figure}

\subsection{Implementation}
\textbf{Datasets.}
We summarize the datasets  used in experiments in Table~\ref{table:datasets}. As for attacks to linear regression models in VFL, we adopt Motor-temperature, Red wine and Residential building. For attacks to logistic regression in VFL , we use Breast cancer, Iris and Residential Building. Attacks to linear regression models in HFL are launched to datasets such as Fish, Motor-temperature, Red wine and House. Finally, Secureboost attacks are performed on Breast cancer, Iris and Vehicle scale datasets.

\textbf{Frameworks.}
{\tt FATE} provides a concrete implementation of linear regression, logistic regression and SecureBoost. We implement attacks on the regression models under vertical FL on {\tt FATE}. Since the attack to the multi-party case is similar to the two-party case, we focus on the latter. In the two-party VFL, the attacker calculates the intermediate variable $d$ and the feature $z^A$ in each training iteration according to the description in Sec.~\ref{sec:reg_vfl}. At the testing phase, the attacker submits a set of linearly independent queries to retrieve the values of $z^A_{test}$ without any encryption and uses $z^A_{test}=X^A_{test} W^A$ to restore the model weights $W^A$. Particular to the multi-party case, the arbiter collects $g^A$ and the attacker obtains the plaintext $d$ by colluding with the arbiter.

The attack to linear regression in horizontal FL is implemented under the framework of {\tt PySyft}. The attacker records the averaged model weights $W_k$ and its own weights $W^B_k$ to calculate the victim's weights $W^A_k$. The victim's gradients can be obtained by Eq.~\eqref{eqn:ga}. By feeding both the quadratic and/or linear equations to a solver, the attacker inverts the private input of the victim. The solver runs on AMD Ryzen5 3600 CPU with $16$GB memory.

We implement the attack to SecureBoost on {\tt FATE} framework by logging relevant information. The attack runs on Intel Xeon CPU E5-2630 v3 with $251$GB memory. Since the running time of reading and writing the log file is much longer than the actual training process, we need to reduce the number of I/Os to improve attack efficiency. Hence we group a number of inputs in batches to feed to the tree and perform block search. By this way, we improve the efficiency by $\log_{2}{n}$ where $n$ is the number of inputs in a batch.

\begin{table}[H]
	\centering
	\caption{Attack Linear Regression Models in Vertical FL}
	\label{table:vertical linear regression}
	\scalebox{0.72}{
		\begin{tabular}{|l|l|l|l|l|l|l|l|}
			\hline
			Dataset                                                                         & \begin{tabular}[c]{@{}l@{}}Attacker \\ feature\end{tabular} & \begin{tabular}[c]{@{}l@{}}Victim \\ feature\end{tabular} & \begin{tabular}[c]{@{}l@{}}Fake \\ feature\end{tabular} &Samples & Time(s) & \begin{tabular}[c]{@{}l@{}}Relative \\ Error\end{tabular} & KDR\\ \hline
			\multirow{4}{*}{\begin{tabular}[c]{@{}l@{}}Motor-\\ temperature\end{tabular}}    & $9$                                                           & $2$                                                         & $0$   &$9$                                                    & $0.015$   & $4.3\times 10^{-4}$ & $0$                                              \\ \cline{2-8} 
			& $7$                                                           & $4$                                                         & $0$     &$7$                                                  & $0.14$    & $8.2\times 10^{-5}$ & $10.7\% $                                              \\ \cline{2-8} 
			& $6$                                                           & $5$                                                         & $2$       &$8$                                                & $0.312$   & $2.3\times 10^{-3}$ & $15\% $                                                \\ \cline{2-8} 
			& $4$                                                           & $7$                                                         & $3$        &$7$                                               & $0.781$   & $2.2\times 10^{-4}$ & $30.6\% $                                                \\ \hline
			\multirow{3}{*}{\begin{tabular}[c]{@{}l@{}}Red-\\ wine\end{tabular}}             & $8$                                                           & $3$                                                         & $0$  &$8$                                                    & $0.063$   & $3.6\times 10^{-6}$ & $4.1\% $                                              \\ \cline{2-8} 
			& $6$                                                           & $5$                                                         & $3$        &$9$                                               & $0.282$   & $4.2\times 10^{-4}$ & $13.3\% $                                                \\ \cline{2-8} 
			& $5$                                                           & $6$                                                         & $2$        &$7$                                               & $0.532$   & $3.1\times 10^{-3}$ & $23.8\% $                                                \\ \hline
			\multirow{3}{*}{\begin{tabular}[c]{@{}l@{}}Residential-\\ building\end{tabular}} & $101$                                                         & $6$                                                         & $3$    &$76$                                                   & $0.704$   & $1.9\times 10^{-4}$ & $2.2\% $                                                \\ \cline{2-8} 
			& $72$                                                          & $35$                                                        & $0$        &$67$                                               & $593.812$ & $1.8\times 10^{-6}$ & $23.9\% $                                              \\ \cline{2-8} 
			& $55$                                                          & $52$                                                        & $0$       &$55$                                                & $2541.02$ & $8.3\times 10^{-5}$ &$44.6\% $                                              \\ \hline
		\end{tabular}
	}
\end{table}
\begin{table}[H]
	\centering
	\caption{Attack Logistic Regression Models in Vertical FL}
	\label{table:vertical logistic regression}
	\scalebox{0.72}{
		\begin{tabular}{|l|l|l|l|l|l|l|l|}
			\hline
			Dataset                                                                         & \begin{tabular}[c]{@{}l@{}}Attacker \\ feature\end{tabular} & \begin{tabular}[c]{@{}l@{}}Victim \\ feature\end{tabular} & \begin{tabular}[c]{@{}l@{}}Fake \\ feature\end{tabular} & Samples & Time(s) & \begin{tabular}[c]{@{}l@{}}Relative \\ Error\end{tabular} & KDR\\ \hline
			\multirow{5}{*}{Breast}                                                          & $27$                                                          & $3$                                                         & $0$                                                       &$27$& $0.062$   & $2.4\times 10^{-4}$ & $1.2\% $                            \\ \cline{2-8} 
			& $20$                                                          & $10$                                                        & $0$ &$20$                                                      & $3.187$   & $8.4\times 10^{-5}$ & $18\% $                            \\ \cline{2-8} 
			& $20$                                                          & $10$                                                        & $3$  &$23$                                                     & $3.344$   & $2.9\times 10^{-4}$ & $15.7\% $                            \\ \cline{2-8} 
			& $16$                                                          & $14$                                                        & $0$  &$16$                                                     & $10.891$  & $1.6\times 10^{-3}$ & $34.8\% $                            \\ \cline{2-8} 
			& $14$                                                          & $16$                                                        & $3$   &$17$                                                    & $17.125$  & $8.4\times 10^{-4}$ & $38.6\% $                            \\ \hline
			\multirow{5}{*}{Iris}                                                            & $3$                                                           & $1$                                                         & $3$   &$6$                                                    & $0.015$   & $1.6\times 10^{-12}$ & $0$                            \\ \cline{2-8} 
			& $3$                                                           & $1$                                                         & $0$     &$3$                                                  & $0.015$   & $4.7\times 10^{-14}$ & $0$                             \\ \cline{2-8} 
			& $2$                                                           & $2$                                                         & $0$      &$2$                                                 & $0.016$   & $2\times 10^{-4}$ & $0$                              \\ \cline{2-8} 
			& $2$                                                           & $2$                                                         & $1$       &$3$                                                & $0.016$   & $5.2\times 10^{-5}$ & $0$                            \\ \cline{2-8} 
			& $1$                                                           & $3$                                                         & $2$    &$3$                                                   & $0.047$   & $3.5\times 10^{-5}$ & $11.1\% $                              \\ \hline
			\multirow{3}{*}{\begin{tabular}[c]{@{}l@{}}Residential-\\ building\end{tabular}} & $103$                                                         & $4$                                                         & $1$    &$76$                                                   & $0.219$   & $2.7\times 10^{-4}$ & $1.0\% $                            \\ \cline{2-8} 
			& $97$                                                          & $10$                                                        & $0$   &$69$
			& $3.922$   & $5.7\times 10^{-4}$ & $5.2\% $                             \\ \cline{2-8} 
			& $92$                                                          & $15$                                                        & $0$      &$67$                                                 & $16.281$  & $1.9\times 10^{-4}$ & $9.1\% $                              \\ \hline
		\end{tabular}
	}
\end{table}

\subsection{Attacks to Regression Models}
\textbf{Two-Party Vertical FL.} We launch attacks to linear and logistic regression models on three datasets and report the results in Table~\ref{table:vertical linear regression} and Table~\ref{table:vertical logistic regression} respectively. We let the attacker and the victim own different proportions of data features, and see whether the victim's data can be entirely inverted given a proportion of known data. The running time of the attack is also given, which grows with the number of features. The relative reconstruction error (mean error/input mean) is also provided to show the inversion accuracy.

Since linear equations are all incorporated in the attack, we define the {\em known data ratio} (KDR) as the ratio between the amount of known data for the attacker to invert the victim's inputs and the amount of data owned by the victim:
$$KDR_{v} = \frac{(n^{A} - 1)(n^{A} - 2)}{2mn^{A}},$$
where $n^{A}$ is the number of victim's attributes and $n^A-2\leq m \leq n^B$ is the number of data records. Smaller KDR value means that the attacker gains more information about the victim and thus the more severe the privacy leakage is. For a better view of the proportion of known data versus all the training data inverted, we visualize the relation between KDR and the number of victim features ($n^A$) as well as data samples ($m$) in Fig.~\ref{fig:KDR}. As shown by the figure, the less the victim's features, or the higher the number of data samples, the lower the KDR value, indicating that it is threatening to invert all victim's inputs when it has many data samples but few attributes. Nevertheless, throughout all cases, KDR remains well under $50\%$. The results also support that the attacker can enhance the attack by adding fake features. 

According to Thm.~\ref{thm2}, when the number of victim's attributes is equal to or less than $2$, the degree of freedom of the equation system is $0$, and hence the attacker could practically invert all data without requiring further information. In these cases, KDR $=0$ held with the experimental results in Table.~\ref{table:vertical linear regression} and Table.~\ref{table:vertical logistic regression}.  For the cases where the number of victim's attributes is greater than $2$, we need to additionally know $\frac{(n^A-1)(n^A-2)}{2}$ data points, and KDR $> 0$ in these cases. Typically, the more the number of victim's features, the higher the KDR. In the case where $m \geq n^B$, we let the attacker fake a number of features to be able to invert victim's inputs.

\begin{figure}[th]
	\centering
	\includegraphics[width=0.7\linewidth]{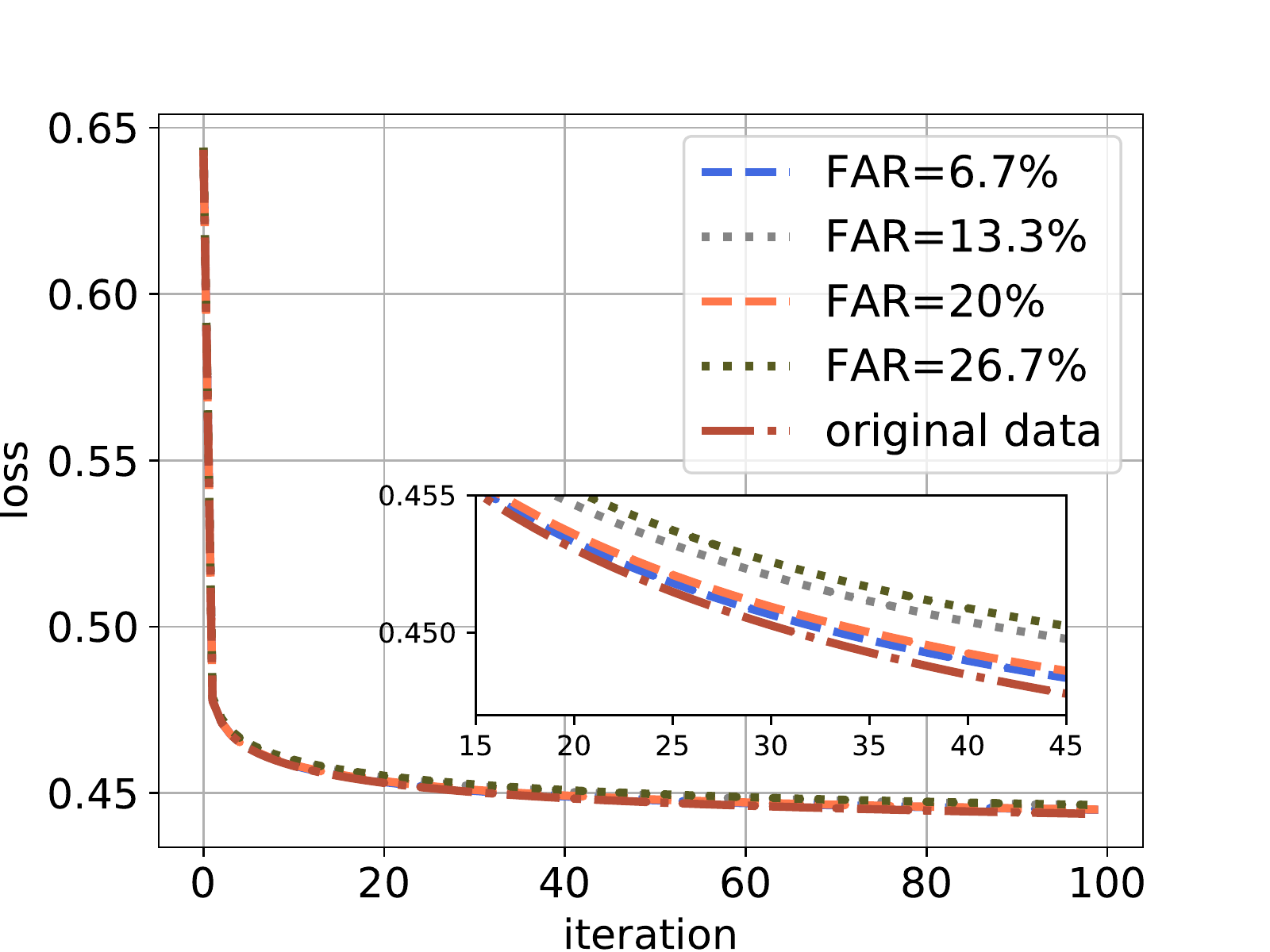}
	\caption{Breast Cancer dataset: model losses v.s. different fake attribute rates throughout training iterations. Different FARs make almost no difference to the vaildity of the model.}
	\label{fig:model_validity}
\end{figure}

To invert victim's data, we let the attacker fake attributes with relatively small randomly distributed values. To verify the assumption that adding small fake values does not hurt the validity of the model, we conduct experiments to explore the influence of such fake attributes. For datasets with total number of attributes $n_o$, we set the number of fake attributes to be $n_f$, and compare losses across training iterations under different {\em fake attribute rate} (FAR) where FAR $={n_f}/{n_o}$. We show the results on Breast Cancer dataset as an example in Fig.~\ref{fig:model_validity}. It is intuitive that the smaller the FAR is, the closer the training model behavior is to the original one. Actually, despite different FARs, the model behaves almost the same with the original one. Hence we conclude that it is practical for the attacker to inject fake attributes without being detected. Furthermore, as linearly independent vectors are required to solve equations in our attack, we find it helpful to add tiny error to the attacker's attributes. Floating-point error caused by Python calculation is sufficient to serve as fake attributes.

\textbf{Two-Party Horizontal FL.} We also launch attacks to the linear regression model under two-party HFL. Results are shown in Table~\ref{table:horizontal}. It is notable that the information the attacker can acquire in horizontal setting is much less than the vertical one. On one hand, the attacker cannot get the linear constraint as in vertical FL. On the other, the attributes of the victim and the attacker are the same, and the KDR is calculated as 
$$KDR_{h} = \frac{m(m - 1)}{2mn} = \frac{(m-1)}{2n},$$
where $m$ is the number of data records owned by the victim and $n$ is the number of attributes given $m \leq n$. If $m = n$, KDR can be very high, even close to $50 \%$ to invert the victim's inputs as we can see in Table~\ref{table:horizontal}. It is also reflected that the KDR tends to be small when the victim samples get smaller or the attribute number gets larger. Different from vertical FL, we are restricted from injecting more attributes to decrease KDR. When $m >n$,  $KDR_h = \frac{(m-1+m-n) \times n}{2mn} = \frac{2m-n-1}{2m} \geq 50 \%$ according to the proof of Lemma~\ref{lm2}. As a result, the cost of restoring victim data is very high that the attacker is required to know some attributes of every data record.

\begin{table}[H]
	\centering
	\caption{Attack Linear Regression Models in Horizontal FL}
	\label{table:horizontal}
	\scalebox{0.80}{
	\begin{tabular}{|l|l|l|l|l|l|l|}
	\hline
	Data Set                                                     & feature& \begin{tabular}[c]{@{}l@{}}Victim \\ samples\end{tabular}&\begin{tabular}[c]{@{}l@{}}Attacker \\ samples\end{tabular} & Time(s) & \begin{tabular}[c]{@{}l@{}}Relative \\ Error\end{tabular} & KDR                \\ \hline
	Fish  & $5$ & $5$ &$27$ & $0.219$ & $2.8\times10^{-10}$ & $40\% $                \\ \hline
	\begin{tabular}[c]{@{}l@{}}Motor-\\ temperature\end{tabular} & $11$  & $11$ &$60$ & $1.548$  & $1.4\times 10^{-9}$ & $45.4\% $
	\\ \hline
	Red-wine &$11$ &$4$ &$120$ &$0.516$ & $5.2\times 10^{-12}$ & $13.6\% $
	\\ \hline
	House &$6$ &$3$ &$24$ &$0.171$  & $1.0\times 10^{-12}$ & $16.6\% $
	\\ \hline
\end{tabular}
}
\end{table}
\begin{figure}[th]
	\centering
	\includegraphics[width=0.7\linewidth]{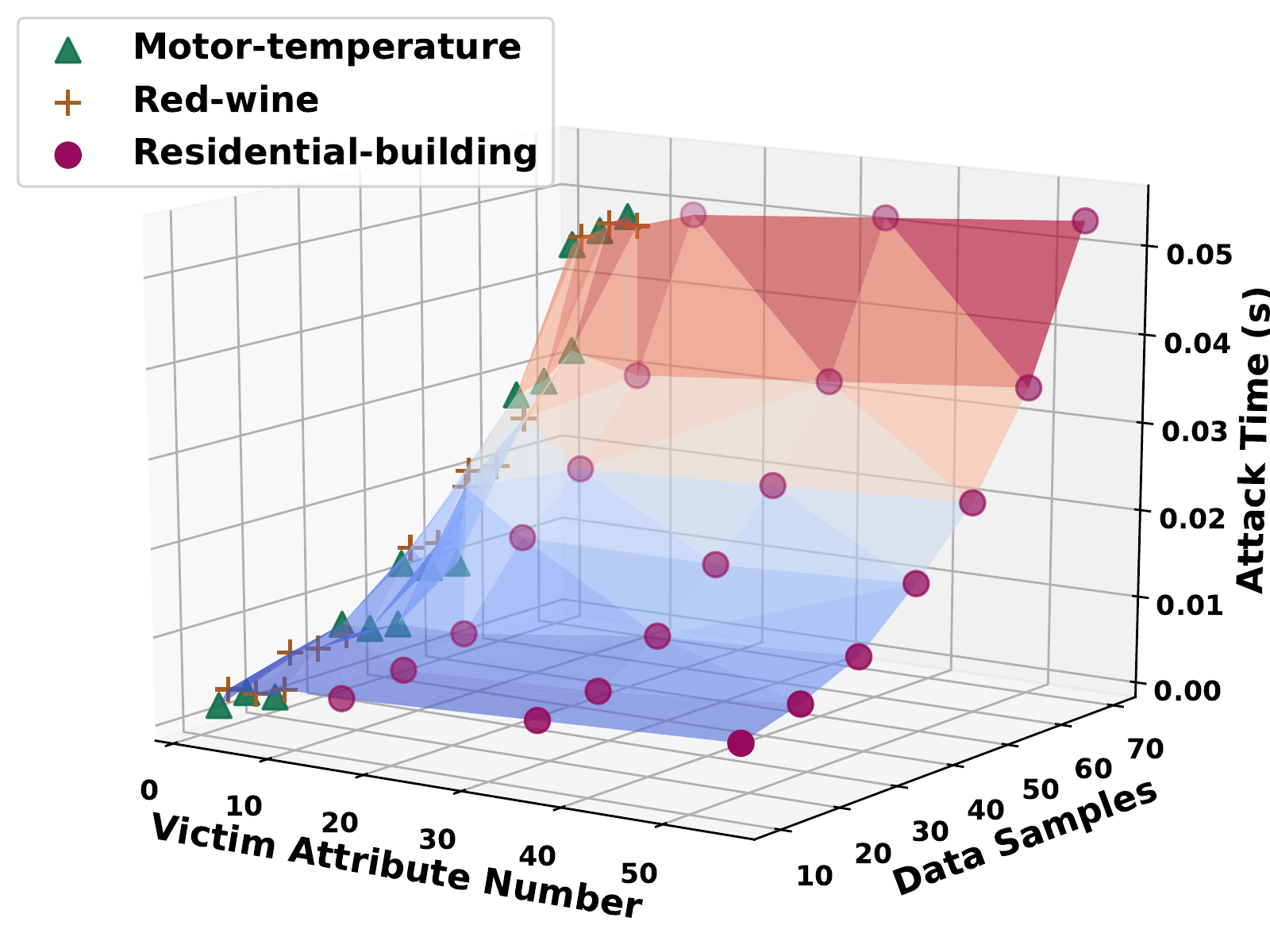}
	\caption{Running time of attacks to linear regression model in multi-party VFL.}
	\label{fig:multiple_vertical_linear_regression}
\end{figure}
\begin{figure}[th]
	\centering
	\includegraphics[width=0.7\linewidth]{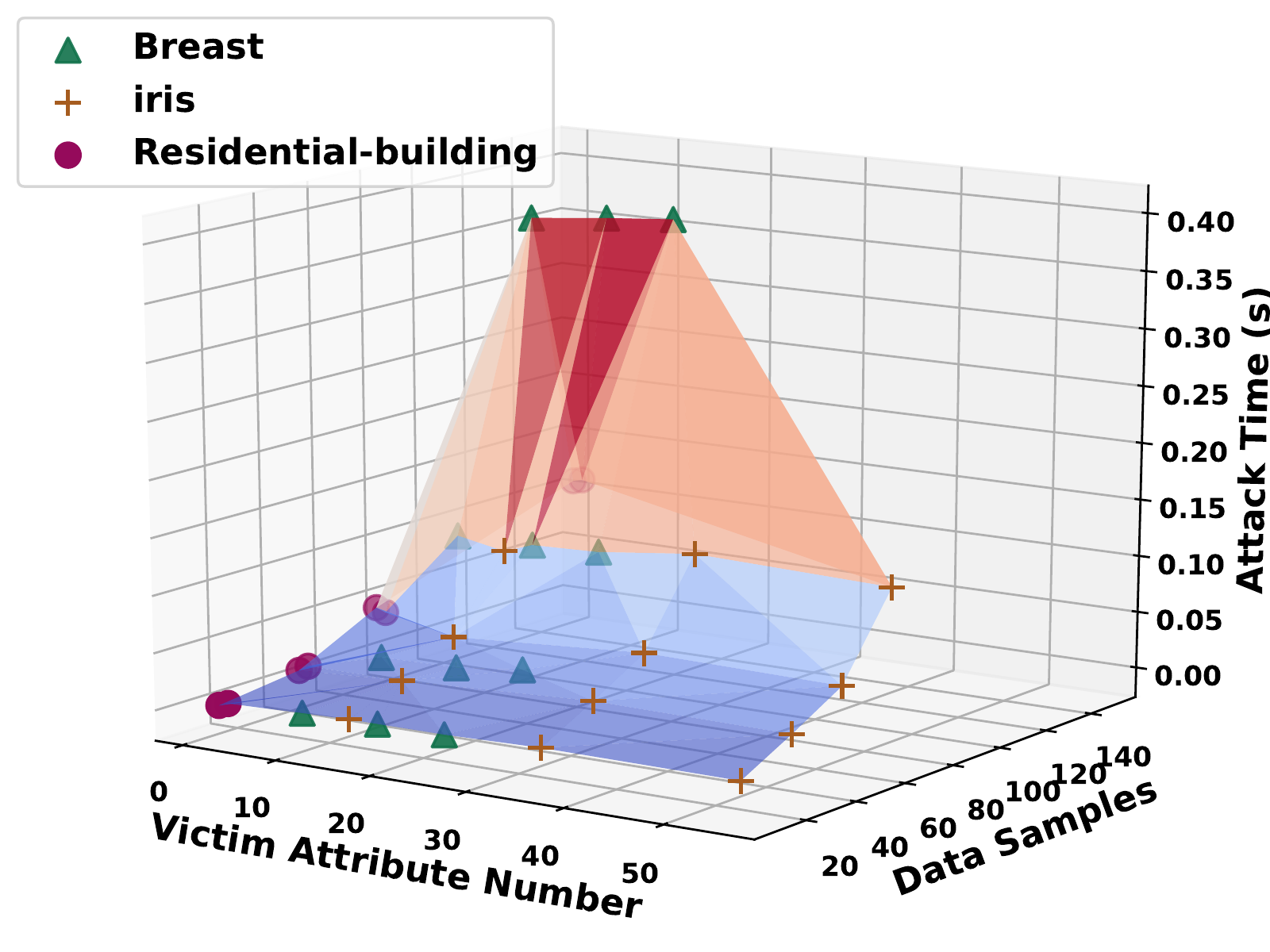}
	\caption{Running time of attacks to logistic regression model in multi-party VFL.}
	\label{fig:multiple_vertical_logistic_regression}
\end{figure}

\textbf{Multi-Party Vertical FL.} In multi-party FL settings, the attacker is able to collude with the arbiter. We perform attacks to the linear and logistic regression models on three datasets. The running time of each attack to datasets of different victim attribute dimensions and data samples are given in Fig.~\ref{fig:multiple_vertical_linear_regression} and Fig.~\ref{fig:multiple_vertical_logistic_regression} respectively. At each spot, the attacker and victims have different proportions of attributes, but the victim's data can be entirely inverted in all cases. In our experiments, the order of magnitude of the relative reconstruction error is all under $10^{-10}$, which is mainly caused by the floating-point calculation error.

The running time of the attack grows with the number of data samples $m$ but does not vary much with the number of attributes $n$. It agrees with the observation that one needs to obtain $m$ linearly independent $d$'s (Eqn.~\ref{eqn:d_eqn}) to solve equations across different iterations.  As a result, this part of running time is only related to $m$.

\begin{table}[H]
	\centering
	\caption{SecureBoost Attack on Different Datasets}
	\label{table:secureboost}
	\scalebox{0.65}{
	\begin{tabular}{|c|c|c|c|c|c|c|c|c|c|}
		\hline
		Data Set& \begin{tabular}[c]{@{}l@{}}Victim\\ feature\end{tabular} 
		&  \begin{tabular}[c]{@{}l@{}}Attacker \\feature\end{tabular}
		& \begin{tabular}[c]{@{}l@{}}feature\\range\end{tabular}
		&Subtrees&\begin{tabular}[c]{@{}l@{}}Victim\\ Nodes\end{tabular}&Queries&$\epsilon$
		&\begin{tabular}[c]{@{}l@{}}Records\\/Query\end{tabular}
		&\begin{tabular}[c]{@{}l@{}}Time\\(sec)\end{tabular}\\
		\hline
		\begin{tabular}[c]{@{}l@{}}Vehicle\\scale\end{tabular} & 9 & 9 &[-1,1] & 20 &102&306& $10^{-7}$& 401& 19311\\
		\hline
		Iris & 2&2& [0,10]&15 & 15&30& $10^{-2}$& 401& 1857\\
		\hline
		Breast & 20&10&[-10,10]& 5 & 15 &60 & $10^{-6}$& 201& 1896\\
		\hline
	\end{tabular}
}
\end{table}

\begin{table}[H]
	\centering
	\caption{SecureBoost Attack under Different Precisions}
	\label{table:differentprecision}
	\scalebox{0.76}{
	\begin{tabular}{|c|c|c|c|c|c|c|c|c|c|}
		\hline
		Data Set  
		&\multicolumn{3}{c|}{Vehicle scale}
		&\multicolumn{3}{c|}{Iris}   
		&\multicolumn{3}{c|}{Breast}\\
		\hline                  
		$\epsilon$ & $10^{-2}$ & $10^{-4}$& $10^{-6}$& $10^{-1}$& $10^{-4}$& $10^{-6}$& $10^{-1}$& $10^{-3}$& $10^{-6}$ \\
		\hline 
		Queries  & 102 &204&306&15&30&45&15&30&45\\
		\hline 
		Time (sec) & 6435 &12873 &19311&928&1857&2786&473&945&1420\\ 
		\hline
        
	\end{tabular}
}
\end{table}

\begin{figure}[H]
	\centering
	\includegraphics[width=0.9\linewidth]{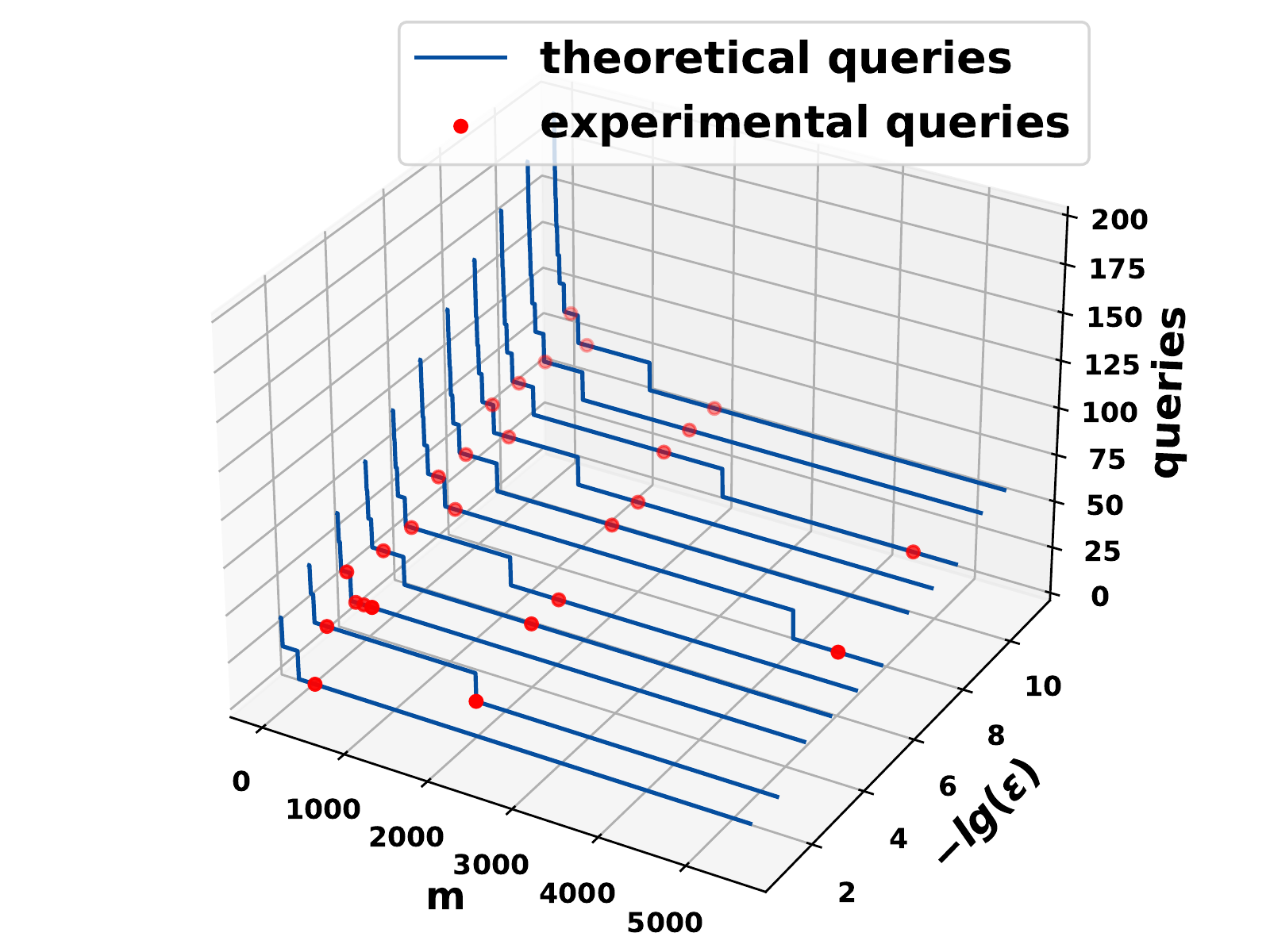}
	\caption{Breast Cancer dataset: the relation of the number of queries, precision and $m$.}
	\label{exsec}
\end{figure}

\subsection{Attacks to SecureBoost}
Since the attack to the multi-party SecureBoost can be reduced to the two-party case, we focus on the experiments of the latter.

Acting as an attacker, we train tree models respectively on three datasets, and perform attacks according to Alg.~\ref{alg:secureboostatk}. The results are shown in Table~\ref{table:secureboost}.  In the table, subtrees represent the number of subtrees used in the model. Victim nodes denotes the number of nodes that the attack targets at. Queries means the number of queries that the attacker submits and Records/Query stands for the number of records per query. $\epsilon$ denotes the precision required on victim's features.

We also compare the query numbers and running time of the attack for different precision $\epsilon$s in Table~\ref{table:differentprecision}, in which records per query is fixed to $401$. While we can fully recover the range of each training input, the higher the precision, the longer the running time for the attack. We also inverstigate how many queries that the attacker needs to invert the input range up to precision $\epsilon$ both theoretically and experimentally in Fig.~\ref{exsec}. $m$ denotes the number of input queries dividing the query interval in a uniform manner. The figure demonstrates that a higher precision (smaller $\epsilon$) typically requires a larger number of queries and under the same precision, the query number could be smaller given a larger $m$.

\section{Related Works}
Our work is mainly related to the following literature.

\subsection{Attacks to Federated Learning}
Many works are concerned about launching attacks in FL. Examples include \cite{hitaj2017deep, melis2019exploiting, wang2019beyond, nasr2019comprehensive, Bagdasaryan2018, baruch2019little}. Hitaj {\em et al.} \cite{hitaj2017deep} propose an attack which steals the private information of other workers in FL. The attacker recovers sensitive class representatives of other workers by taking advantage of the generative adversarial network. Melis {\em et al.} \cite{melis2019exploiting} and Wang {\em et al.} \cite{wang2019beyond} show that, beyond class representatives, the exchanged data in FL can leak too much information such as membership of a data record, unintended data properties, etc. Nasr {\em et al.} \cite{nasr2019comprehensive} design inference algorithms for both centralized and FL, and evaluate the white-box membership inference attacks to trace the training data records. Common in these works, the attacker can actively push stochastic gradient descent to leak more information about the participants' data. Different from their works, we analyze the privacy threats in a secure FL scenario where the message exchanged among different parties are encrypted, which largely confines the attacker's capability. Moreover, we do not assume the attackers actively alter the training of the global model, as such behavior would incur accuracy degradation and be detected.

Another line of works \cite{Bagdasaryan2018, baruch2019little} are about embedding backdoors in the FL models, which would be triggered later when the model is fed with the backdoor triggers. Our work is orthogonal to these works as the attacking goals are different: while their works try to manipulate the model, our attack targets at intercepting the private information of the participants.

\subsection{Secure Machine Learning}
Our work is related to secure machine learning where the model parameters, feature and data are protected under cryptography schemes. Some of the works use HE to secure the learning models \cite{han2019logistic, kim2018secure, kim2018logistic, gilad2016cryptonets, crawford2018doing}. For partially homomorphic encryptions, only addition and multiplication can be performed over the ciphertext, hence the neural network implements the activation function by approximated high-order polynomials instead. Both training and inference are conducted on the encrypted model and encrypted data. \cite{han2019logistic, kim2018logistic} present efficient algorithms for logistic regression on homomorphic encrypted data, and evaluate the algorithms on real-world applications. Leveled homomorphic encryption \cite{gilad2016cryptonets} and fully homomorphic encryption \cite{crawford2018doing} is also applied to learning models in but mainly as proofs of concepts.

A number of approaches have been proposed for secure FL, such as \cite{zhang2018gelu, cheng2019secureboost, ryffel2018generic} and libraries including {\tt FATE} \cite{fate2020}, {\tt PySyft} \cite{pysyft2020}, and {\tt TensorFlow Privacy} \cite{tensorflow2020}. These works consider privacy-preserving architectures for collaborative learning across different entities. GELU-Net \cite{zhang2018gelu} partitions a neural network to two non-colluding parties with one performing linear computation over encrypted data and the other executes non-polynomial computations on plaintext. SecureBoost \cite{cheng2019secureboost} presents a privacy-preserving approach to train a tree boosting model over multiple parties. {\tt FATE} implements secure computation protocols based on HE and MPC, whereas {\tt PySyft} relies on MPC, HE and differential privacy. {\tt TensorFlow Privacy} includes implementations of TensorFlow optimizers for training machine learning models with differential privacy. We mainly target at the secure FL based on HE and MPC in this paper.

\subsection{Comparison with Previous Works}
There are also some works focusing on the privacy of federated learning, especially on VFL. The methods and the difference between their works and ours are listed in the following.

In \cite{weng2020privacy}, Weng {\em et al.} propose reverse sum attack and reverse multiplication attack, which infer victim's private training data. The attacks share some similarity with ours in terms of demonstrating real-world secure VFL can potentially leak private training data. However, we give a more precise analysis on the threat that the participants face, by quantifying the amount of information required to invert all of victim's inputs. At its core, the reverse multiplication attack can be considered as only utilizing the linear constraints in inversion, whereas our attack incorporates quadratic and linear equations. Hence our attack is more powerful and better captures the privacy leakage. 

%consider the attack for logistic regression model on the vertical federated learning first. They set arbiter corrupted with one of the party, which is similar to one of the attack methods in our multi-party attack. However, they only consider the logistic model and as we have strengthened in our threat model, this assumption is too strong. Because when the arbiter and one of the party launch the attack, the encryption makes no sense anymore. Then, during the test phase, much information leaks to the attackers and the attack is so easy to complete. Using it as the core attack is not reasonable. It is better to be a additional part. Moreover, they consider a decision attack which is have nothing related to our attack, and thus I willl not introduce it.

In \cite{luo2020feature}, the authors claim that their setting is the most stringent in that the adversary controls the trained vertical FL model as well as the model predictions. In our attack, the attacker only controls a part of the vertical FL model, the training labels, very few data and nothing else. We argue even under such a restrictive setting, the attacker can invert up to all victim's training data, posing as both significant and practical threats to FL participants.

\section{conclusion}

We reveal in this paper that privacy threats widely exist in today's secure federated learning framework, regardless the secure computation protocols implemented. This threat originates from the interaction between participants as well as some inappropriate training procedures. We analyze such privacy threats in linear/logistic regression models and SecureBoost tree models, and verify their impacts by launching attacks to real-world systems. Lessons can be learned that the secure FL systems in practice may not be privacy-presserving under some circumstances.

% Can use something like this to put references on a page
% by themselves when using endfloat and the captionsoff option.
\ifCLASSOPTIONcaptionsoff
  \newpage
\fi

\bibliographystyle{IEEEtran}
% argument is your BibTeX string definitions and bibliography database(s)
\bibliography{main}

% biography section
% 
% If you have an EPS/PDF photo (graphicx package needed) extra braces are
% needed around the contents of the optional argument to biography to prevent
% the LaTeX parser from getting confused when it sees the complicated
% \includegraphics command within an optional argument. (You could create
% your own custom macro containing the \includegraphics command to make things
% simpler here.)
%\begin{IEEEbiography}[{\includegraphics[width=1in,height=1.25in,clip,keepaspectratio]{mshell}}]{Michael Shell}
% or if you just want to reserve a space for a photo:

%\begin{IEEEbiography}{Michael Shell}
%Biography text here.
%\end{IEEEbiography}

% if you will not have a photo at all:
%\begin{IEEEbiographynophoto}{John Doe}
%Biography text here.
%\end{IEEEbiographynophoto}

% insert where needed to balance the two columns on the last page with
% biographies
%\newpage

%\begin{IEEEbiographynophoto}{Jane Doe}
%Biography text here.
%\end{IEEEbiographynophoto}

% You can push biographies down or up by placing
% a \vfill before or after them. The appropriate
% use of \vfill depends on what kind of text is
% on the last page and whether or not the columns
% are being equalized.

%\vfill

% Can be used to pull up biographies so that the bottom of the last one
% is flush with the other column.
%\enlargethispage{-5in}
\begin{IEEEbiography}[{\includegraphics[width=25mm,height=35mm,clip,keepaspectratio]{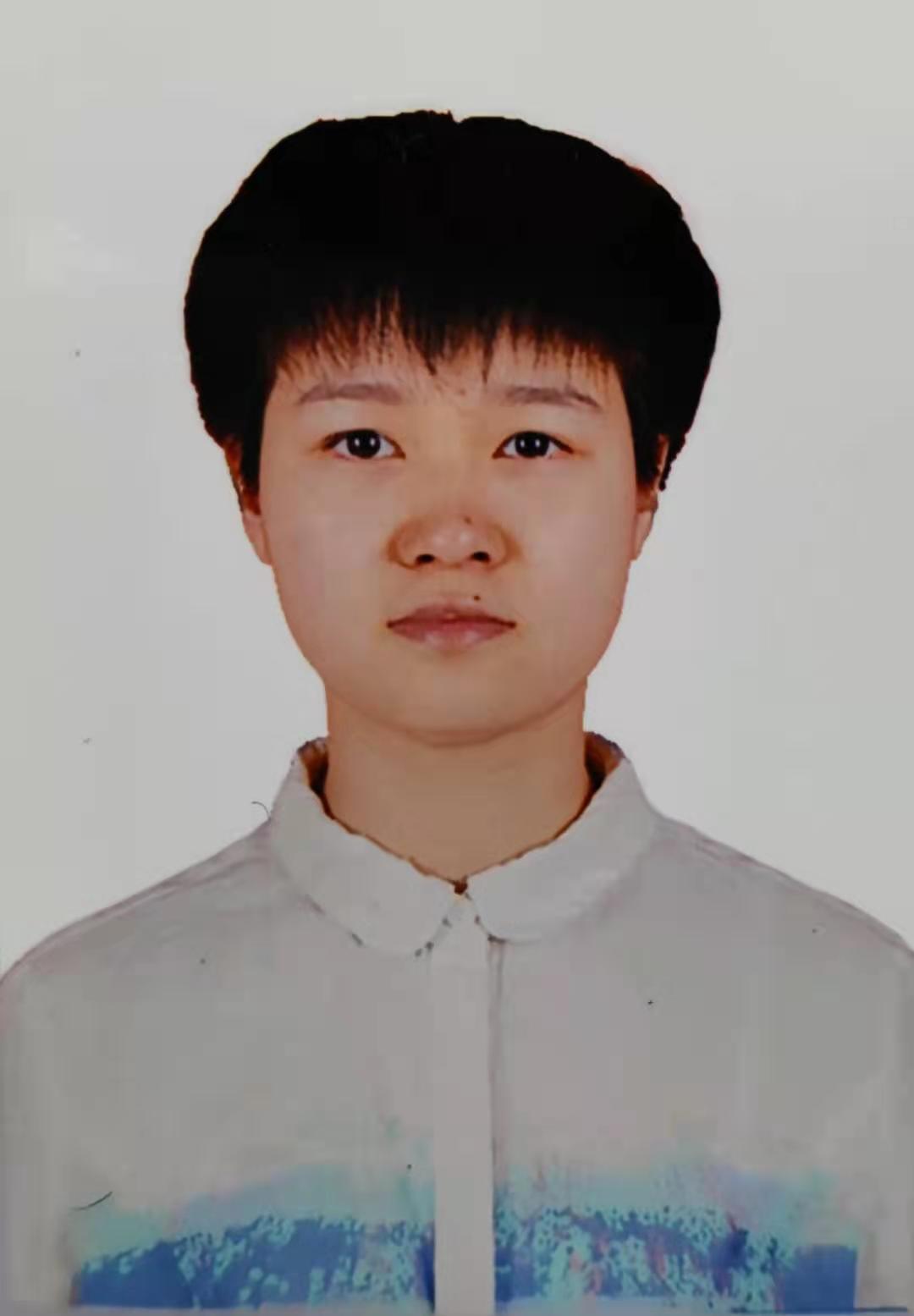}}]{Yuchen Li} has been a undergraduate student in Shanghai Jiao Tong University since 2018. Her research focuses on the intersected areas of security, privacy, machine learning. She is a student member of the IEEE.
\end{IEEEbiography}

\begin{IEEEbiography}[{\includegraphics[width=25mm,height=35mm,clip,keepaspectratio]{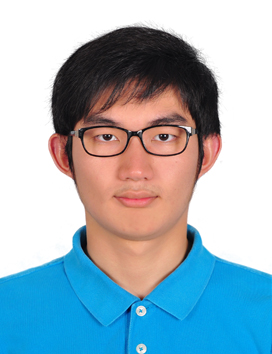}}]{Yifan Bao} enrolled in the Shanghai Jiao Tong University in 2019. As an undergraduate, he majors in Information Security in the School of Cyber Science and Engineering now. His research interests includes the intersected areas of security, privacy, and machine learning. 
\end{IEEEbiography}

\begin{IEEEbiography}[{\includegraphics[width=25mm,height=35mm,clip,keepaspectratio]{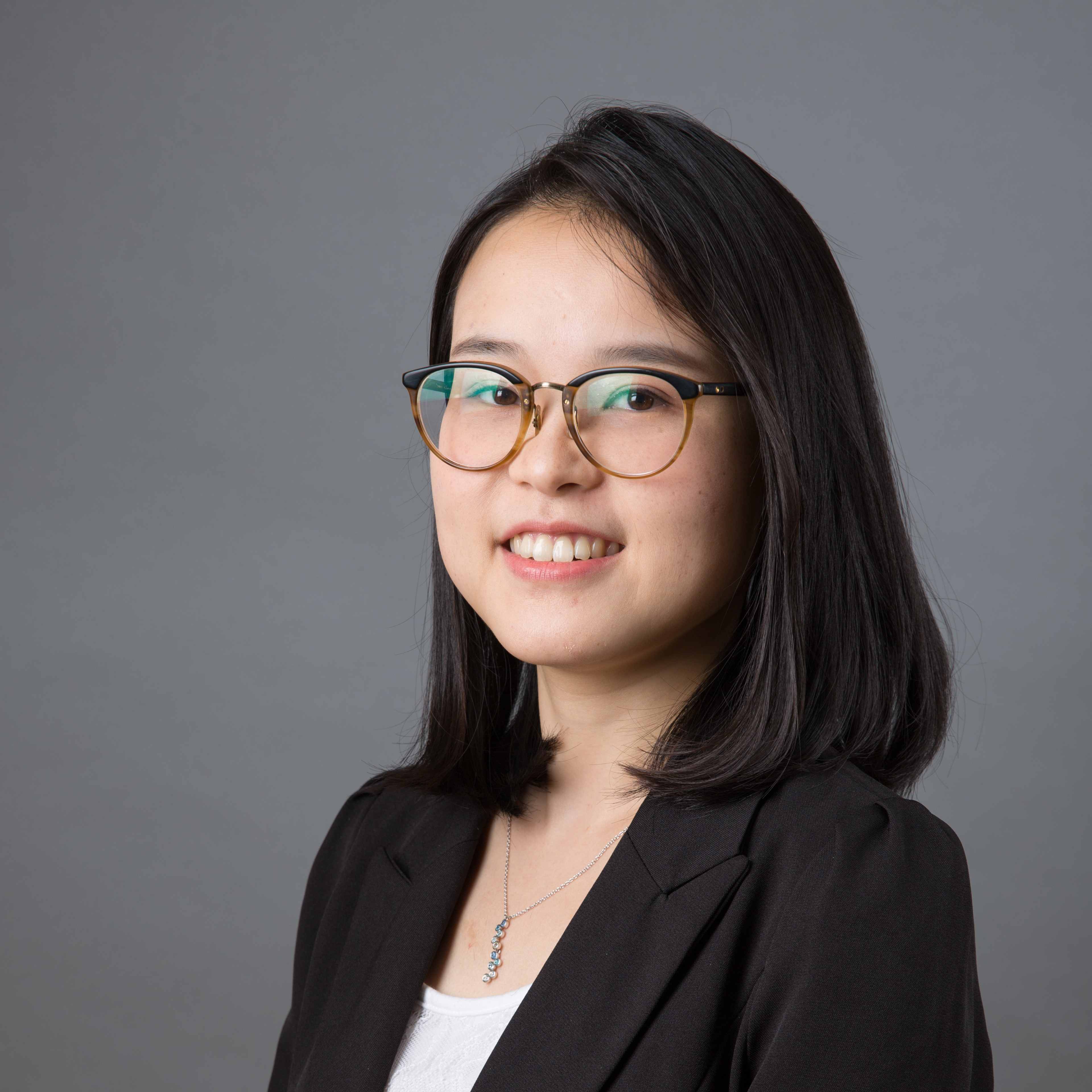}}]{Liyao Xiang} received the BS degree from the Department of Electronic Engineering at Shanghai Jiao Tong University in 2012, and the MS and PhD degrees in Department of Electrical and Computer Engineering from University of Toronto in 2015 and 2018 respectively. She is now an assistant professor at John Hopcroft Center for Computer Science of Shanghai Jiao Tong University. Her research interests includes the intersected areas of security, privacy, machine learning, and mobile computing. Topics include adversarial learning, privacy analysis in data mining, mobile systems and applications, mobile cloud computing. She is a member of the IEEE.
\end{IEEEbiography}

\begin{IEEEbiography}[{\includegraphics[width=25mm,height=35mm,clip,keepaspectratio]{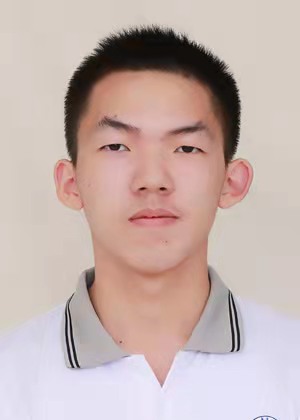}}]{Junhan Liu} started undergraduate studies in the Department of Computer Science and Engineering at Shanghai Jiao Tong University in 2019. His research interests includes the intersected areas of security, privacy and machine learning.
\end{IEEEbiography}

\begin{IEEEbiography}[{\includegraphics[width=25mm,height=35mm,clip,keepaspectratio]{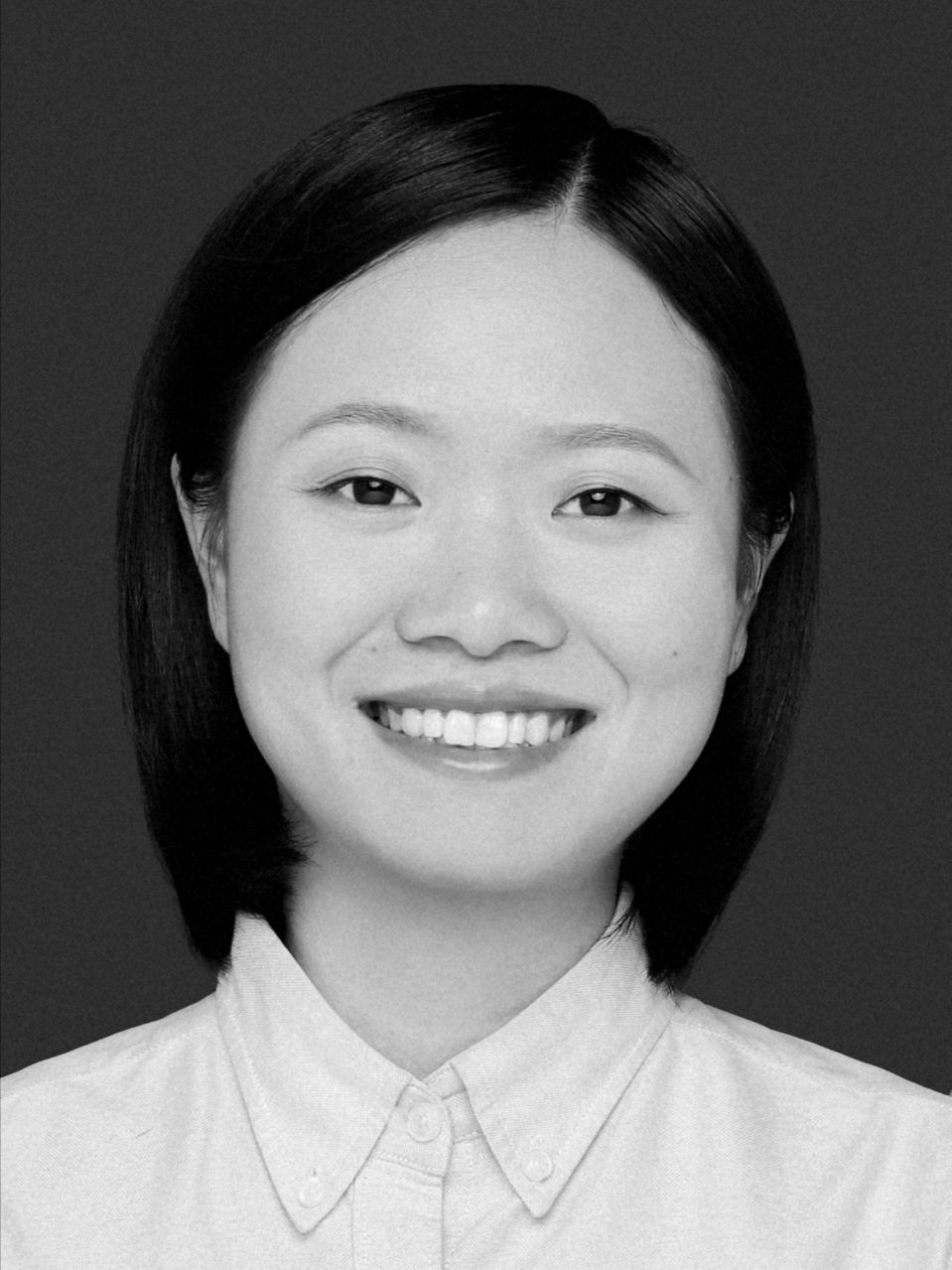}}]{Cen Chen} received her Ph.D. degree at the School of Information Systems, Singapore Management University in 2017. She was a visiting scholar in Heinz College, Carnegie Mellon University from 2015 to 2016. She is currently an algorithm expert with Ant Group. 
% She has published around 40 papers in prestigious conferences. 
Her research interests include text mining,  privacy preserving machine learning and automated planning \& scheduling.
\end{IEEEbiography}

\begin{IEEEbiography}[{\includegraphics[width=25mm,height=35mm,clip,keepaspectratio]{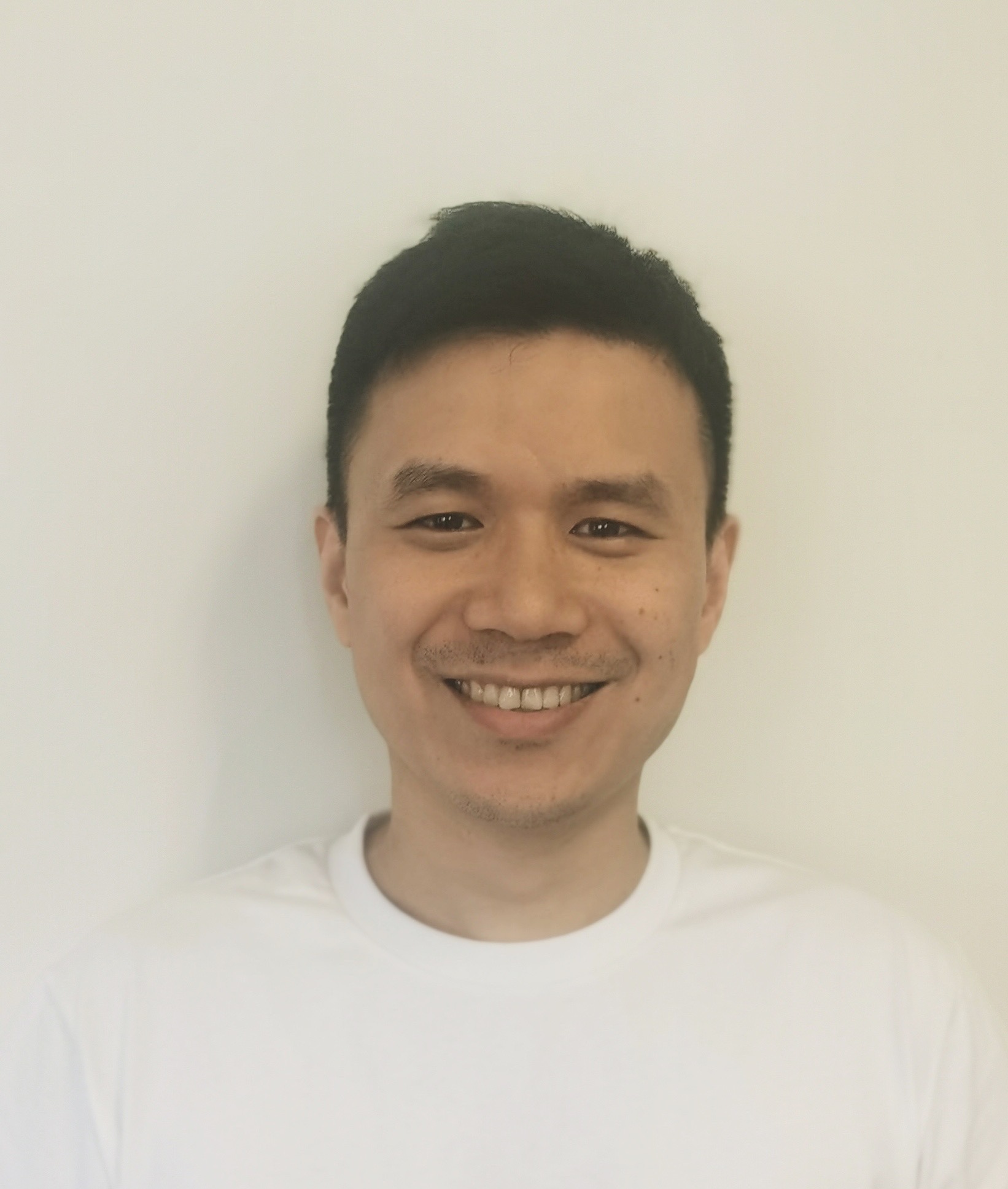}}]{Li Wang} is currently an Algorithm Expert at AI Department, Ant Group. He got his  master degree in Computer Science and Technology at Shanghai Jiao Tong University in 2010.  His research mainly focuses on privacy preserving machine learning, transfer learning, graph representation, and distributed machine learning.
\end{IEEEbiography}

\begin{IEEEbiography}
[{\includegraphics[width=1in,height=1.25in,clip,keepaspectratio]{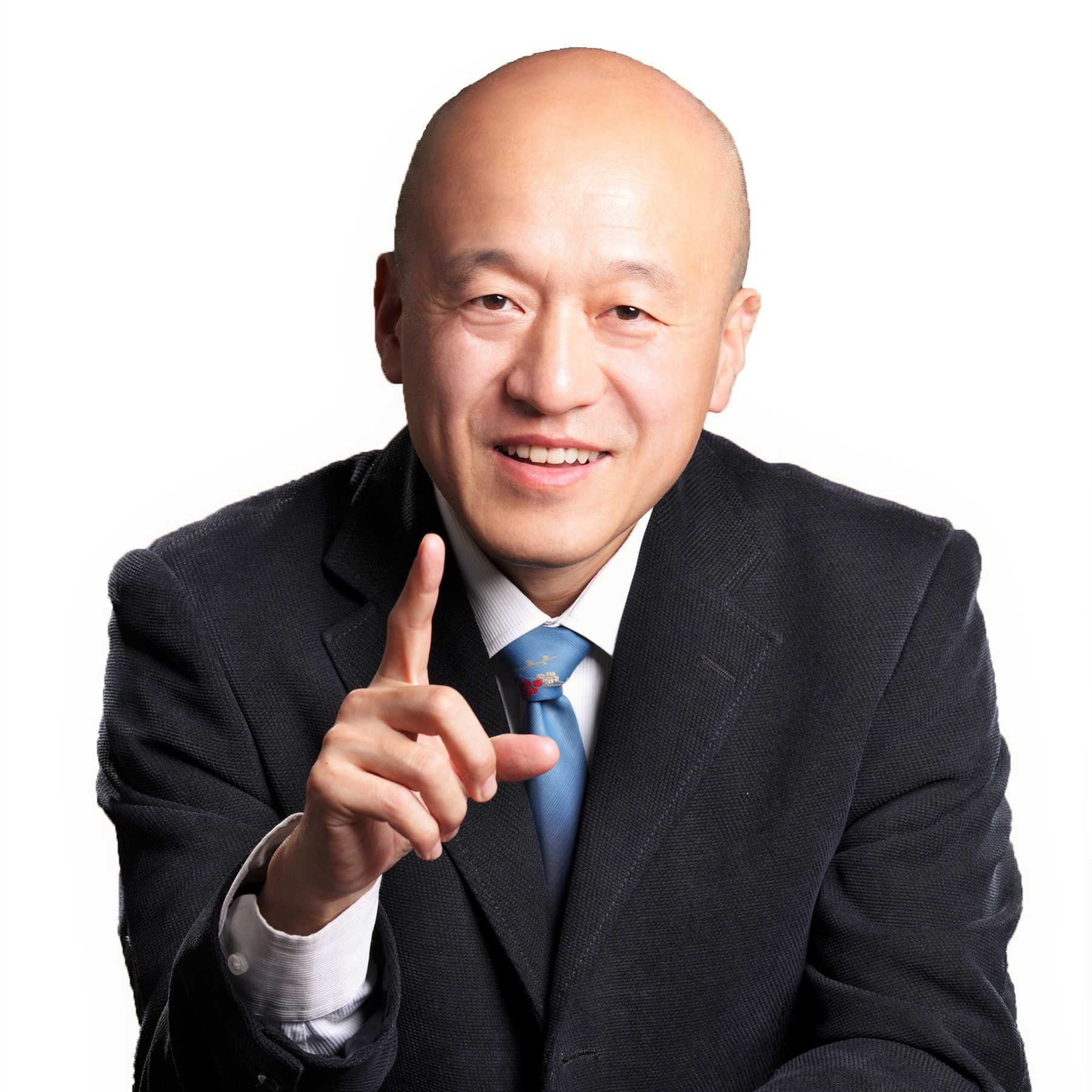}}]{Xinbing Wang} received the B.S. degree (with hons.) from the Department of Automation, Shanghai
Jiaotong University, Shanghai, China, in 1998, and
the M.S. degree from the Department of Computer Science and Technology, Tsinghua University,
Beijing, China, in 2001. He received the Ph.D.
degree, major in the Department of electrical and
Computer Engineering, minor in the Department
of Mathematics, North Carolina State University,
Raleigh, in 2006. Currently, he is a professor in
the Department of Electronic Engineering, Shanghai
Jiaotong University, Shanghai, China. Dr. Wang has been an associate editor
for IEEE/ACM Transactions on Networking and IEEE Transactions on Mobile
Computing, and the member of the Technical Program Committees of several
conferences including ACM MobiCom 2012, ACM MobiHoc 2012-2014,
IEEE INFOCOM 2009-2017.
\end{IEEEbiography}

\vfill

% that's all folks
\end{document}